\documentclass{article}


\usepackage[final,nonatbib]{neurips_2023}




\usepackage[utf8]{inputenc} 
\usepackage[T1]{fontenc}    

\usepackage{url}            
\usepackage{booktabs}       
\usepackage{amsfonts}       
\usepackage{nicefrac}       
\usepackage{microtype}      
\usepackage{xcolor}         

\usepackage{amsmath}
\usepackage{amssymb}
\usepackage{bm}
\usepackage{nicefrac}
\usepackage{amsthm}
\usepackage[numbers]{natbib}
\bibliographystyle{plain}
\usepackage{transparent}
\usepackage{enumitem}
\usepackage{graphicx}
\usepackage{multirow}
\usepackage{selectp}

\usepackage{booktabs}

\usepackage{array}

\usepackage{algorithm}
\usepackage{algpseudocode}

\usepackage{hyperref}

\usepackage{pifont}
\newcommand{\cmark}{\ding{51}}
\newcommand{\xmark}{\transparent{0.4}\ding{55}}

\newtheorem{theorem}{Theorem}
\newtheorem*{lemma}{Lemma}
\newtheorem{proposition}{Proposition}

\theoremstyle{definition}
\newtheorem{definition}{Definition}

\theoremstyle{remark}
\newtheorem*{remark}{Remark}

\DeclareMathOperator*{\argmin}{arg\,min}

\graphicspath{ {figures/} }

\title{D-CIPHER: Discovery of Closed-form Partial Differential Equations}

%

\author{%
   Krzysztof Kacprzyk \\
  University of Cambridge\\
  \texttt{kk751@cam.ac.uk} \\
  \And
  Zhaozhi Qian \\
  University of Cambridge \\
  \texttt{zq224@cam.ac.uk} \\
  \And
  Mihaela van der Schaar \\
  University of Cambridge, \\
The Alan Turing Institute \\
\texttt{mv472@cam.ac.uk} \\
}

\begin{document}

\maketitle

\begin{abstract}
Closed-form differential equations, including partial differential equations and higher-order ordinary differential equations, are one of the most important tools used by scientists to model and better understand natural phenomena. Discovering these equations directly from data is challenging because it requires modeling relationships between various derivatives that are not observed in the data (\textit{equation-data mismatch}) and it involves searching across a huge space of possible equations. Current approaches make strong assumptions about the form of the equation and thus fail to discover many well-known phenomena. Moreover, many of them resolve the equation-data mismatch by estimating the derivatives, which makes them inadequate for noisy and infrequent observations. To this end, we propose D-CIPHER, which is robust to measurement artifacts and can uncover a new and very general class of differential equations. We further design a novel optimization procedure, CoLLie, to help D-CIPHER search through this class efficiently. Finally, we demonstrate empirically that it can discover many well-known equations that are beyond the capabilities of current methods.
\end{abstract}
\section{Introduction}
\label{sec:introduction}
Scientists have been using mathematical equations to describe the world for centuries. In particular, closed-form differential equations turned out to be one of the best tools to model physical phenomena. A differential equation describes a relationship between a quantity and its derivatives (rates of change); it is called closed-form if this relationship is described by a mathematical expression consisting of a finite number of variables, constants, arithmetic operations, and some well-known functions (e.g., exponent, logarithm, trigonometric functions)\footnote{Detailed discussion in Appendix \ref{app:definitions}}. 
Closed-form differential equations provide a general description of reality in a concise representation that is amenable to closer inspection by scientists. This renders them transparent and interpretable to human experts.

Discoveries of these equations required a thorough knowledge of the theory, strong mathematical skills, substantial creativity, and good intuition.
The goal of this work is to discover closed-form differential equations directly from data thus accelerating the process of scientific discovery.

\textbf{Challenges in discovering differential equations from data}
\begin{itemize}[noitemsep,nolistsep,leftmargin=*]
\item \textbf{Partial and higher-order derivatives.}
Many algorithms \citep{brunton_discovering_2016,qian_d-code_2022} can only identify Ordinary Differential Equations (ODEs) which evolve only with respect to one variable (usually time). In contrast, many natural phenomena are described by equations involving many variables (e.g., spatial coordinates) called Partial Differential Equations (PDEs). Many equations also involve higher-order derivatives.


\item \textbf{Derivatives not observed.}
Discovering differential equations from data is challenging because the derivatives are usually not observed in the dataset (\textit{equation-data mismatch} \citep{qian_d-code_2022}). This makes verifying a candidate equation a non-trivial task. Most of the methods try to resolve this issue by estimating the derivatives \citep{brunton_discovering_2016,rudy_data-driven_2017}. However, derivative estimation is difficult, especially when the data is sampled infrequently or with high noise \citep{qian_d-code_2022,messenger_weak_2021}. For an illustrative study, see Appendix \ref{app:additional_discussion}.

\item \textbf{Strong assumptions and constrained search space.}
The majority of algorithms for identifying differential equations make many assumptions about the form of the equation. In particular, they make the \textit{evolution assumption} (defined and explained later) and assume that the equation can be represented as a linear combination of prespecified functions and differential operators \citep{brunton_discovering_2016,messenger_weak_2021}. However, many well-known equations, such as a forced harmonic oscillator or an inhomogeneous wave equation, cannot be represented in that way.
\end{itemize}

Currently, a few algorithms tackle only some of these challenges. In particular, Weak SINDy \citep{messenger_weak_2021} is able to discover PDEs without estimating the derivative by utilizing a variational approach. However, the form of the equation is constrained to be in a form amenable for a sparse regression algorithm. D-CODE \citep{qian_d-code_2022}, on the other hand, uses a variational approach in conjunction with a symbolic regression algorithm to discover closed-form ODEs. However, it cannot handle higher-order derivatives or multiple independent variables, so it cannot be used to discover closed-form PDEs. The algorithms that do not require the evolution assumption appeared in \citep{mangan_inferring_2016} and \citep{kaheman_sindy-pi_2020} but they require derivative estimation and only consider equations represented as linear combinations of prespecified functions.

\textbf{Contributions.} In this work, we develop the
\textbf{D}iscovery of \textbf{C}losed-form \textbf{P}artial and \textbf{H}igher-order Differential \textbf{E}quations in a \textbf{R}obust Way framework
(D-CIPHER) that does not estimate the derivatives, requires fewer assumptions, has a broader search space than previous techniques, and works for both higher-order ODEs and PDEs. Our contributions are as follows:
\begin{itemize}[noitemsep,nolistsep,leftmargin=*]
    \item We examine the landscape of different types of PDEs from the discovery perspective. In particular, we introduce new notions such as \textit{evolution form}, \textit{evolution assumption}, \textit{derivative-bound} part, and \textit{derivative-free} part. We use them to describe what kinds of PDEs can be discovered with current methods and to motivate our new class of differential equations. (Section \ref{sec:pdes})
    \item We propose a new general class of PDEs (\textit{Variational-Ready} PDEs) that admit the variational formulation (and thus allows to circumvent the derivative estimation). We also prove a theorem that motivates a novel objective function. (Section \ref{sec:vr-pdes})
    \item We use the novel objective function to develop D-CIPHER, a new algorithm that searches over the \textit{Variational-Ready} PDEs. (Section \ref{sec:d-cipher})
\end{itemize}
In addition to the main contributions above, we also develop a new optimization procedure (CoLLie) to help D-CIPHER search through this space efficiently. (Section \ref{sec:collie})



\vspace{-1.5mm}
\section{Preliminaries}
\label{sec:prelim}
\vspace{-1.5mm}

In this section, we provide background information about PDEs and their variational formulation.

\textbf{Notation and definitions.}
We denote the set $\{1,2,\ldots,n\}$ as $[n]$ and the set of non-negative integers as $\mathbb{N}_0$. Throughout this paper we let $M, N, K \in \mathbb{N}$ be some natural numbers and let $\Omega \subset \mathbb{R}^M$ be an open set inside $\mathbb{R}^M$. A comprehensive table with all symbols used in this work can be found in Appendix \ref{app:notation_and_definitions} together with some definitions restated more formally.

\textbf{Going beyond ODEs.}
The simplest differential equations are ordinary differential equations that describe quantities that evolve with respect to only one independent variable, usually time. Most methods assume that the ODE is explicit and can be represented as a system of first-order ODEs:
\begin{equation}
    \dot{u}_j(t) = f_j(t,\bm{u}(t))
\end{equation}
where $\dot{u}_j$ represents the derivative of $u_j$. Then the discovery problem is reduced to deciding the order of the derivative (usually first or second) and the discovery of $f_j$.

For PDEs, it is not enough to talk about \textit{the} derivative, as we can take derivatives with respect to different variables. We denote the \textit{mixed derivative} as $\partial^{\bm{\alpha}}$, where $\bm{\alpha} \in \mathbb{N}_0^M$ is called a multi-index, and define it as $
    \partial^{\bm{\alpha}} = \partial_1^{\alpha_1} \partial_2^{\alpha_2} \ldots \partial_M^{\alpha_M} $.
Each $\partial_i^{\alpha_i} = \partial^{\alpha_i}/\partial x_i^{\alpha_i}$ is a $\alpha_i^{\text{th}}$-order partial derivative with respect to $x_i$ (the $i^{\text{th}}$ independent variable)\footnote{Throughout this work we assume that the functions we use are smooth enough for the \textit{equality of mixed partials} \citep{spivak_calculus_2018} to hold. In that case, any mixed derivative can be uniquely specified by a multi-index.}. We define the order of $\bm{\alpha}$ as $|\bm{\alpha}| = \sum_{i=1}^M \alpha_i$. We call $\partial^{\bm{\alpha}}$ \textit{non-trivial} if $|\bm{\alpha}|>0$.

A PDE of order $K$ is any equation of the form
\begin{equation}
    f(\bm{x},\bm{u}(\bm{x}),\partial^{[K]}\bm{u}(\bm{x})) = 0 \ \forall \bm{x} \in \Omega
\end{equation}
where $\partial^{[K]}\bm{u}$ are all non-trivial mixed derivatives of all $u_j$ ($j \in [N]$) up to the $K^{\text{th}}$ order. We call a PDE \textit{closed-form} if $f$ is closed-form.

\textbf{Variational formulation}
(VF) of PDEs is a way to describe PDEs without referring to their derivatives \citep{friedlander_introduction_1982}. It works as follows: we take a differential equation, we multiply it by a special \textit{testing function}, and integrate. Finally, we perform integration by parts to move the derivatives from the dependent variable $u$ onto the testing functions. E.g., for a homogeneous heat equation,

\vspace{-4mm}
\begin{equation}
\label{eq:heat_variational}
\small
    \partial_t u - \theta \partial_x^2 u = 0 \iff \int_{\mathbb{R}^2} (\partial_t u - \theta \partial_x^2 u) \phi \ dtdx = 0 \ \forall \phi \iff \int_{\mathbb{R}^2} -u\partial_t \phi + \theta u \partial_x^2 \phi \ dtdx = 0 \ \forall \phi
\end{equation}
\vspace{-2mm}

For more details, see Appendix \ref{app:notation_and_definitions} and \ref{app:vr-pdes}. By not depending on the derivatives, methods that utilize VF are more robust to noise than their derivative-estimating counterparts \citep{qian_d-code_2022,messenger_weak_2021,messenger_weak_2021-1}.

\vspace{-1mm}
\section{Relaxing assumptions while staying robust to noise}
\label{sec:pdes}
Below, we introduce the \textit{evolution assumption} (EA) and the \textit{linear combination} (LC) assumption made by the current discovery methods.


\textbf{Evolution Assumption.}
Although there is no generally accepted notion of an explicit PDE (as is the case for ODEs), we define an \textit{evolution form} of a PDE to be an equation of the form
\begin{equation}
\label{eq:evolution_form}
    \partial^{\bm{\alpha}}u_j(\bm{x}) = f(\bm{x},\bm{u}(\bm{x}),\partial^{[K]/\bm{\alpha}}\bm{u}(\bm{x}))  \ \forall \bm{x} \in \Omega
\end{equation}
where $\partial^{[K]/\bm{\alpha}}$ is $\partial^{[K]}$ with $\partial^{\bm{\alpha}}$ omitted, $\bm{\alpha}$ is a known multi-index and $j\in[N]$. Note that if $M=1$ and $|\bm{\alpha}|=K$ then Equation \ref{eq:evolution_form} becomes exactly the definition of an explicit ODE.

In fact, many algorithms for PDE discovery assume a particular evolution form \citep{messenger_weak_2021}. We call it an \textit{evolution assumption} (EA). However, this assumption requires the knowledge of $\bm{\alpha}$ and $j$ which might not be trivial. Usually, $\partial^{\bm{\alpha}}$ is assumed to be the first derivative with respect to time ($\partial_t$) \citep{rudy_data-driven_2017} but it is not the case for many well-known PDEs such as the 
wave equation or Gauss's law. 


\textbf{Linear combinations.}
Current PDE discovery algorithms \citep{rudy_data-driven_2017,messenger_weak_2021,chenAnyEquationForest} consider PDEs that are linear in parameters. That means the PDE can be represented as a linear combination of functions, i.e.,
\begin{equation}
\label{eq:linear-combination}
    \sum_{p=1}^P \theta_p f_p(\bm{x},\bm{u}(\bm{x}),\partial^{[K]}\bm{u}(\bm{x})) = 0 \ \forall \bm{x} \in \Omega
\end{equation}
where $\theta_p \in \mathbb{R}$ for $p\in[P]$ are the only constants that are optimized. As there are lot of expressions that cannot be put in that form, these algorithms fail to discover more complex equations. In particular, for an unknown $\theta \in \mathbb{R}$ functions such as $\sin(\theta x_i)$, $e^{\theta x_i}$ or $\frac{1}{x_i + \theta}$ cannot be learned by these algorithms.

\textbf{How to relax these assumptions and still allow for variational formulation?}
Current methods that utilize VF either assume that the PDE is in an LC form or they only work for explicit first-order ODEs. Moreover, all of them also make the evolution assumption. Relaxing LC is not trivial because not all PDEs admit VF. As in Equation \ref{eq:heat_variational}, the PDE has to be a sum of terms for which the integration by parts can be performed. Our crucial observation is that for any term that does not contain any derivatives (and thus does not need to be integrated by parts) \textit{no additional constraints} need to be put in place. Due to the significance of these terms, we propose the following characterization of a PDE.


\textbf{Derivative-bound part and derivative-free part.}
Any PDE can be expressed in the form
\begin{equation}
\label{eq:parts}
     f(\bm{x},\bm{u}(\bm{x}),\partial^{[K]}\bm{u}(\bm{x})) - g(\bm{x},\bm{u}(\bm{x}))  = 0 \ \forall \bm{x} \in \Omega
\end{equation}
where we collect all the terms with the derivatives into $f(\bm{x},\bm{u}(\bm{x}),\partial^{[K]}\bm{u}(\bm{x}))$ and all terms without the derivatives into $g(\bm{x},\bm{u}(\bm{x}))$. We call $f$ the \textit{derivative-bound} part and $g$ the \textit{derivative-free} part (denoted also $\partial$\textit{-bound} and $\partial$\textit{-free}). $\partial$-free part can be evaluated directly given $\bm{u}$, whereas the $\partial$-bound part requires access to the derivatives. Note that for first-order ODEs, $f$ is trivial and equal to $\dot{u}_j$.

\textbf{Constraints on the derivative-bound part.}
Although VF does not require any constraints on the $\partial$-free part, we still need to put some constraints on the $\partial$-bound part for the integration by parts to work. This is what we do in Section \ref{sec:vr-pdes}, where we aim to define currently the broadest form of PDEs that admit the variational formulation using the above characterization.

\textbf{Optimization challenge.}
D-CIPHER does \textit{not} need the evolution assumption and it can even discover some PDEs that cannot be put into the evolution form. Moreover, unlike previous methods, D-CIPHER is \textit{not} limited to PDEs that can be represented as a linear combination of functions (we describe the exact form we assume in Section \ref{sec:d-cipher}). This makes the optimization problem much harder as we search over \textit{all} closed-form functions $g$ and for each candidate, we try to find the best counterpart $f$ among the allowed expressions. This is very different from the previous approaches, which either do not need to find $f$ as they work only for first-order ODEs \citep{qian_d-code_2022} or they constrain equally both the $\partial$-bound part and $\partial$-free part to be a linear combination of some pre-specified functions \citep{messenger_weak_2021} (for more details, see Table \ref{tab:lc_and_ea}, and Table \ref{tab:comparison_d-code} in Appendix \ref{app:additional_discussion}). One way we address this challenge is by developing a new optimization procedure (Section \ref{sec:collie}).

\section{Related works}
\label{sec:related-works}
\textbf{Symbolic Regression.} The goal of symbolic regression is to find a closed-form expression that best models the given dataset both in terms of accuracy and simplicity. In contrast with the conventional regression analysis which optimizes the parameters of a pre-specified model, symbolic regression aims at discovering both the general structure and the parameters of the model. 
Most of the existing work focuses on developing optimization algorithms. Genetic Programming \citep{koza_genetic_1992} has been widely used for that task \citep{schmidt_distilling_2009}. A different strategy has been employed in AI Feynman \citep{udrescu_ai_2020, udrescu_ai_2020-1} that uses neural networks to reduce the search space by identifying simplifying properties like symmetry or separability. Optimization methods based on pre-trained neural networks \citep{biggio_neural_2021, holt2022deep}, reinforcement learning \citep{petersen_deep_2021}, and Meijer-G functions \citep{alaa_demystifying_2019, crabbe_learning_2020} have also been proposed.

\textbf{Data-driven discovery of closed-form differential equations.}
\label{sec:data-driven}
Data-driven discovery of physical laws is an established area of machine learning \citep{bongard_automated_2007,schmidt_distilling_2009}.
The pioneering work in that area was SINDy \citep{brunton_discovering_2016} that constrained the space of equations to linear combinations of functions from a predefined library and used sparse regression to discover explicit ODEs. It was later extended to include implicit ODEs \citep{mangan_inferring_2016,kaheman_sindy-pi_2020} and PDEs \citep{rudy_data-driven_2017,schaeffer_learning_2017}. Various other extensions were proposed by improving the derivative estimation and the training procedure \citep{rao_discovering_2022, xuDLPDEDeepLearningBased2021}, adding additional selection criteria \citep{mangan_model_2017} and learning the library using genetic programming  \cite{maslyaevDataDrivenPartialDerivative2019,chenAnyEquationForest,xuDLGAPDEDiscoveryPDEs2020}. A different approach is taken by \citep{long_pde-net_2019} (an extension of \citep{long_pde-net_2018}) which uses convolutional and symbolic neural networks. It is important to note that \textit{all of these methods still assume the PDE to be a linear combination} as discussed in Section \ref{sec:pdes} (Equation \ref{eq:linear-combination}) which significantly limits their search space. Some other developments are based on Gaussian processes \citep{raissi_numerical_2018,raissi_hidden_2018} but they require the exact form of the PDE and only optimize the parameters. 

\textbf{Variational approach.} Recently, the variational approach has been used as a viable alternative to derivative estimation. However, they have only been used for differential equations in a linear combination form \citep{messenger_weak_2021,messenger_weak_2021-1,reinboldUsingNoisyIncomplete2020} or closed-form first-order ODEs \citep{qian_d-code_2022}. Extending the variational approach to closed-form PDEs is not trivial as PDEs are much more complex than ODEs and not all closed-form PDEs admit the variational formulation. In fact, the approaches that learn the library mentioned in the previous paragraph can produce exactly such terms which prohibits the use of variational formulation. To address these challenges we use the new notions defined in Section \ref{sec:pdes} to define a new and general class of PDEs in Section \ref{sec:vr-pdes} that admit the variational formulation.

\begin{table}
  \small
  \caption{Columns correspond to challenges outlined in Section \ref{sec:introduction} and answer the following questions: Can it discover PDEs? Does it avoid derivative estimation? Is the evolution assumption unnecessary (Equation \ref{eq:evolution_form})? Can it discover any closed-form $\partial$-free part (Equation \ref{eq:parts})?}
  \vspace{-2mm}
  \label{tab:related_works}
  \centering
  \resizebox{\textwidth}{!}{
  \begin{tabular}{lcccc}
    \toprule
   Method & PDEs & No $\partial$ estimation & No evolution assumption & Any closed-form $\partial$-free part \\
    \midrule
    SINDy \citep{brunton_discovering_2016} & \xmark & \xmark & \xmark & \xmark \\
    SINDy-implicit \citep{mangan_inferring_2016} & \xmark & \xmark & \cmark & \xmark \\
    PDE-FIND \citep{rudy_data-driven_2017}  & \cmark & \xmark & \xmark & \xmark \\
    PDE-Net 2.0 \citep{long_pde-net_2019}  & \cmark & \xmark & \xmark & \xmark \\
    WSINDy \citep{messenger_weak_2021, reinboldUsingNoisyIncomplete2020}  & \cmark & \cmark & \xmark & \xmark \\
    D-CODE \citep{qian_d-code_2022} & \xmark & \cmark & \xmark & \cmark \\
    D-CIPHER & \cmark & \cmark & \cmark & \cmark \\
    \bottomrule
  \end{tabular}
  }
\end{table}


\section{Variational-Ready PDEs}
\label{sec:vr-pdes}


In this section, we propose a new and very general class of PDEs, the \textit{Variational-Ready} PDEs (VR-PDEs), which can be characterized \textit{without} referring to the derivative. 
The VR-PDEs allow arbitrary $\partial$-free part but make some minor restrictions on the $\partial$-bound part.
These restrictions allow one to use the variational formulation of PDEs to circumvent derivative estimation entirely. 
Despite the minor restriction, VR-PDEs contain many well-known PDEs, including all linear PDEs, Maxwell's equations, and Navier-Stokes equations (additional examples provided in Appendix \ref{app:vr-pdes}).


To define the new class of PDEs, we need the following definition.
\begin{definition}[Extended derivative and differential operator]
\label{def:extended_derivative_and_operator}
Let $\bm{\alpha} \in \mathbb{N}_0^M$,  $|\bm{\alpha}|\leq K$,  be a multi-index. Let $h:\mathbb{R}^{M+N}\rightarrow \mathbb{R}$ and $a:\mathbb{R}^M \rightarrow \mathbb{R}$ be smooth functions. An \textit{extended derivative} $\mathcal{E}$, denoted $(\bm{\alpha},a,h)$, maps a vector field $\bm{u}:\mathbb{R}^{M} \rightarrow \mathbb{R}^N$ to a function $\mathcal{E}[\bm{u}]:\mathbb{R}^M \rightarrow \mathbb{R}$ defined as:
\begin{equation}
    \mathcal{E}[\bm{u}](\bm{x}) = a(\bm{x})\partial^{\bm{\alpha}}[h(\bm{x},\bm{u}(\bm{x}))]
\end{equation}
$\mathcal{E}$ is called \textit{closed-form} if $a$ and $h$ are closed-form. We call $\mathcal{E}$ \textit{non-degenerate} if $|\bm{\alpha}| > 0$.

Now, let $(\mathcal{E}_p)_{p\in[P]}$ be a finite sequence of non-degenerate extended derivatives. The extended differential operator, denoted as $ \mathcal{E}_{[P]}$ is an operator defined as:
\begin{equation}
    \mathcal{E}_{[P]}[\bm{u}](\bm{x}) = \sum_{p=1}^P \mathcal{E}_p[\bm{u}](\bm{x})
\end{equation}

\end{definition}
\begin{remark}Any linear operator  $L = \sum_{\bm{\alpha} \in \mathcal{A}} a_{\bm{\alpha}} \partial^{\bm{\alpha}}$ acting on $u_j$  is an extended differential operator.
\end{remark}

\begin{definition}[Variational-Ready PDE] Let $  \mathcal{E}_{[P]}$ be an extended differential operator, and let $g:\mathbb{R}^{M+N}\rightarrow\mathbb{R}$ be a continuous function. We denote a \textit{Variational-Ready} PDE (VR-PDE) by a pair $\left(\mathcal{E}_{[P]},g \right)$ and define it as:
\begin{equation}
\label{eq:VR}
    \mathcal{E}_{[P]}[\bm{u}](\bm{x}) - g(\bm{x},\bm{u}(\bm{x})) = 0 \ \forall \bm{x} \in \Omega
\end{equation}


\end{definition}


We extend the standard variational formulation of PDEs (Proposition \ref{prop:var_form_pdes} in Appendix \ref{app:vr-pdes}) from linear PDEs to all VR-PDEs. The following definition is useful in further discussion. 

\begin{definition}
\label{def:functional}
Consider a field $\bm{u}:\Omega \rightarrow \mathbb{R}^N$, and an extended derivative $\mathcal{E} = (\bm{\alpha},a,h)$. Let $\phi:\Omega \rightarrow \mathbb{R}$ be a \textit{testing function} ($\mathcal{C}^K$ function \footnote{We say $u:\mathbb{R}^M \rightarrow \mathbb{R}$ is in $\mathcal{C}^K$ if $\partial^{\bm{\alpha}}u$ exists and is continuous for all $|\bm{\alpha}|\leq K$.} with compact support). We define the functional
\begin{equation*}
    \mathcal{F}(\mathcal{E},\bm{u},\phi) = \int_{\Omega} h(\bm{x},\bm{u}(\bm{x})) (-1)^{|\bm{\alpha}|} \partial^{\bm{\alpha}}[a(\bm{x})\phi(\bm{x})] d\bm{x}
\end{equation*}
\end{definition}
We can now use this functional to formulate variational characterization of VR-PDEs.
\begin{theorem}
\label{thm:main}
$\bm{u} : \Omega \rightarrow \mathbb{R}^N$, where $u_j \in \mathcal{C}^K$, is a solution to a VR-PDE in Equation \ref{eq:VR} if and only if
\begin{equation}
\label{eq:var_t_linear}
   \sum_{p=1}^P \mathcal{F}(\mathcal{E}_p,\bm{u},\phi) -  \int_{\Omega}\left[ g(\bm{x},\bm{u}(\bm{x}))\phi(\bm{x}) \right] d\bm{x}  = 0
\end{equation}
for all testing functions $\phi:\Omega \rightarrow \mathbb{R}$.
\end{theorem}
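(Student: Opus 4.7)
The plan is to prove the two directions by the same integration-by-parts identity, differing only in which direction that identity is applied, and then to close the converse with a fundamental lemma of the calculus of variations.

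\textbf{Core computation (both directions).} First I would establish the key identity that for each extended derivative $\mathcal{E}_p = (\bm{\alpha}_p, a_p, h_p)$ and each testing function $\phi$,
\begin{equation*}
    \int_\Omega \mathcal{E}_p[\bm{u}](\bm{x})\, \phi(\bm{x})\, d\bm{x} \;=\; \mathcal{F}(\mathcal{E}_p, \bm{u}, \phi).
\end{equation*}
This is just iterated integration by parts: write $\int_\Omega a_p(\bm{x}) \phi(\bm{x})\, \partial^{\bm{\alpha}_p}[h_p(\bm{x}, \bm{u}(\bm{x}))]\, d\bm{x}$ and integrate by parts $\alpha_{p,i}$ times in the $x_i$ direction for each $i\in[M]$, picking up a cumulative factor $(-1)^{|\bm{\alpha}_p|}$ and moving the derivative onto $a_p\phi$. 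All boundary terms vanish because $\phi$ has compact support in $\Omega$, so the intermediate derivatives of $a_p\phi$ are also compactly supported and the flux contributions are zero. The regularity needed is exactly what the statement provides: $u_j \in \mathcal{C}^K$ together with smoothness of $h_p$ gives $h_p(\cdot, \bm{u}(\cdot)) \in \mathcal{C}^K$ by the chain rule, and $a_p\phi$ is smooth with compact support, so each integration by parts is classically justified.

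\textbf{Forward direction.} Assume $\mathcal{E}_{[P]}[\bm{u}](\bm{x}) - g(\bm{x}, \bm{u}(\bm{x})) = 0$ pointwise on $\Omega$. For any testing function $\phi$, multiply by $\phi$, integrate over $\Omega$, and split the sum:
\begin{equation*}
    \sum_{p=1}^P \int_\Omega \mathcal{E}_p[\bm{u}](\bm{x})\, \phi(\bm{x})\, d\bm{x} \;-\; \int_\Omega g(\bm{x},\bm{u}(\bm{x}))\,\phi(\bm{x})\, d\bm{x} \;=\; 0.
\end{equation*}
Applying the core identity term by term converts each integral in the sum into $\mathcal{F}(\mathcal{E}_p, \bm{u}, \phi)$, yielding Equation~\ref{eq:var_t_linear}.

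\textbf{Converse direction.} Assume Equation~\ref{eq:var_t_linear} holds for every testing function $\phi$. Using the core identity in the other direction, this is the same as
\begin{equation*}
    \int_\Omega \bigl( \mathcal{E}_{[P]}[\bm{u}](\bm{x}) - g(\bm{x}, \bm{u}(\bm{x})) \bigr)\, \phi(\bm{x})\, d\bm{x} \;=\; 0 \quad \text{for every testing } \phi.
\end{equation*}
The integrand is a continuous function of $\bm{x}$ on $\Omega$ (each $\mathcal{E}_p[\bm{u}]$ is continuous since $|\bm{\alpha}_p|\le K$ and $u_j \in \mathcal{C}^K$, and $g$ is continuous by assumption). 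The fundamental lemma of the calculus of variations (du Bois-Reymond) then forces $\mathcal{E}_{[P]}[\bm{u}] - g(\cdot, \bm{u}(\cdot)) \equiv 0$ on $\Omega$.

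\textbf{Expected obstacle.} The only delicate step is justifying the iterated integration by parts under just the stated regularity: one must check that at every intermediate stage the integrand is integrable and the partial boundary terms truly vanish. This is handled by peeling the $|\bm{\alpha}_p|$ derivatives off one coordinate at a time, invoking Fubini on the compactly supported integrand, and using equality of mixed partials (already assumed in the paper) so that the order in which coordinates are processed does not matter. Everything else is bookkeeping.
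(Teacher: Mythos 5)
Your proposal is correct and follows essentially the same route as the paper's proof: the equivalence is obtained via the fundamental lemma of the calculus of variations applied to the continuous integrand, combined with iterated integration by parts (the divergence theorem) moving $\partial^{\bm{\alpha}_p}$ from $h_p(\cdot,\bm{u}(\cdot))$ onto $a_p\phi$, with boundary terms vanishing by the compact support of $\phi$. The only difference is presentational — you argue the two implications separately while the paper writes a single chain of equivalences — so there is nothing substantive to add.
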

\vspace{-3mm}
\begin{proof}
Appendix \ref{app:vr-pdes}.
\end{proof}
\vspace{-3mm}

This theorem motivates the \textit{variational loss function} as we expect the left-hand side of Equation \ref{eq:var_t_linear} to be closer to 0 the closer the canditate PDE is to the true one. To calculate how well a set of vector fields $\mathcal{D} = \{\bm{u}^{(d)}\}_{d=1}^D$ matches a VR-PDE $\left(\mathcal{E}_{[P]},g\right)$ we propose the following loss function.

\begin{equation}
\label{eq:loss-function}
    \mathcal{L}\left(\mathcal{E}_{[P]},g\right)  =  \sum_{d=1}^D \sum_{s=1}^S \left( \sum_{p=1}^P \mathcal{F}(\mathcal{E}_p,\bm{u}^{(d)},\phi_s) -  \int_{\Omega} g(\bm{x},\bm{u}^{(d)}(\bm{x}))\phi_s(\bm{x}) d\bm{x} \right)^2
\end{equation}
where $\{\phi_s\}_{s=1}^S$ is a set of predefined testing functions.

This novel loss function makes it possible to evaluate to what extent any VR-PDE matches the observed data. This loss can be used as an optimization objective in any algorithm that searches over some subspace of closed-form VR-PDEs. We propose D-CIPHER in Section \ref{sec:d-cipher} as an example of such an algorithm.

\section{D-CIPHER}
\label{sec:d-cipher}

In this section, we formulate the problem of PDE discovery and then we introduce a novel algorithm (D-CIPHER) to solve it. The diagram and pseudocode are presented in Figure \ref{fig:diagram_main_text} and in Appendix \ref{app:d-cipher}.

\textbf{Problem formulation} We are given a dataset of \textit{observed fields} $\mathcal{D} = \{\bm{v}^{(d)}\}_{d=1}^D$ with a finite \textit{sampling grid} $\mathcal{G} \subset \Omega$. Each $\bm{v}^{(d)}(\bm{x})$ is a noisy measurement, i.e., $\bm{v}^{(d)}:\mathcal{G}\rightarrow \mathbb{R}^N$ is defined as
\begin{equation}
 v^{(d)}_j(\bm{x}) = u^{(d)}_j(\bm{x}) + \epsilon^{(d)}_j(\bm{x}) \: \forall \bm{x} \in \mathcal{G} \: \forall j \in [N] 
\end{equation}
where $\epsilon^{(d)}_j(\bm{x})$ is a realization of a zero-mean random variable (noise), each $u_j^{(d)}:\Omega \rightarrow \mathbb{R}$ is a $\mathcal{C}^K$ function, and every \textit{true field} $\bm{u}^{(d)}$ is governed by the same closed-form PDE $f$.
The task is to infer the closed-form PDE, $f$, from the dataset $\mathcal{D} = \{\bm{v}^{(d)}\}_{d=1}^D$ and the sampling grid $\mathcal{G}$. We assume that $f$ is inside the class of closed-form VR-PDEs (Section \ref{sec:vr-pdes}) and its $\partial$-bound part is inside a subspace of extended differential operators spanned by a user-specified dictionary (see Step 1 below).

We propose an algorithm that consists of three steps. In the first step, we define the subspace of closed-form VR-PDEs we want to search over to reflect our knowledge of the problem. In the second step, we reconstruct the fields from noisy measurements. In the last step, we solve an optimization problem using a modified symbolic regression algorithm. For more details, check Appendix \ref{app:d-cipher}.

\textbf{Step 1: Choose the form and incorporate prior knowledge.}
\label{subsubsec:step1}
A human expert should encode their prior knowledge of the problem into a dictionary of non-degenerate extended derivatives $\mathcal{Q}$ = $\{\hat{\mathcal{E}}_p\}_{p\in[P]}$. We use this dictionary to search over a finite-dimensional subspace of closed-form operators spanned by this set. In other words, we assume that the VR-PDE is of the form:
\begin{equation}
\label{eq:form_for_d-cipher}
      \sum_{p=1}^P \beta_p \hat{\mathcal{E}}_p[\bm{u}](\bm{x}) - g(\bm{x},\bm{u}(\bm{x})) = 0 \ \forall \bm{x} \in \Omega
\end{equation}
where $\bm{\beta} \in \mathbb{R}^P$, $g$ is \textit{any} closed-form function of $M+N$ variables, and $\hat{\mathcal{E}}_p = (\bm{\alpha}_p,a_p,h_p)$.

For instance, a dictionary might include only the partial derivatives up to a certain order. For a 1+1 second-order equation that means $\mathcal{Q} = \{\partial_t,\partial_x, \partial_{tx}, \partial_{t}^2, \partial_{x}^2 \}$. That is already enough to discover heat and wave equations with any closed-form source. If, for instance, the user suspects the presence of the advection term $uu_x$ (as in the Burgers' equation), the term $\partial_x(u^2)$ can be included in the library.

It's important to note that we do \textit{not} assume any particular form of $g$ apart from being closed-form.




\textbf{Step 2: Estimate the fields.}
As the dataset $\mathcal{D}$ consists of noisy and infrequently sampled fields, we first need to estimate the "true" fields $\bm{\hat{u}}^{(d)}$ from $\bm{v}^{(d)}$. Any choice of reconstruction algorithm can be used and the user should choose it according to the problem setting and their domain knowledge.

\textbf{Step 3: Optimize.}
We minimize the loss function in Equation \ref{eq:loss-function} for the estimated fields $\{\bm{\hat{u}}^{(d)}\}_{d=1}^D$ among all PDEs of the form in Equation \ref{eq:form_for_d-cipher}. We solve the following optimization problem:
\begin{equation}
\label{eq:opt_problem}
\min_{g} \min_{||\bm{\beta}||_1 = 1} \sum_{d=1}^D \sum_{s=1}^S \left( \sum_{p=1}^P \mathcal{F}(\beta_p \hat{\mathcal{E}}_p,\bm{\hat{u}}^{(d)},\phi_s) -  \int_{\Omega} g(\bm{x},\bm{\hat{u}}^{(d)}(\bm{x}))\phi_s(\bm{x}) d\bm{x} \right)^2
\end{equation}

As we want to discover both $g$ and $\bm{\beta}$ we cannot use the standard penalties on $\bm{\beta}$ such as the $\lambda||\bm{\beta}||_2$ or $\lambda||\bm{\beta}||_1$, as the loss would be minimized by $g=0$ and $\bm{\beta}=\bm{0}$. Therefore we put the constraint $||\bm{\beta}||_1 = 1$. We choose the L1 norm to encourage sparsity in the coordinates of the vector $\bm{\beta}$.

The inner minimization in Equation \ref{eq:opt_problem} can be rewritten as a constrained least-squares problem.
\begin{equation}
\label{eq:lstsq}
   \min_{ ||\bm{\beta}||_1 = 1} \sum_{(d,s)\in[D]\times[S]} \left( \bm{\beta} \cdot \bm{z}^{(d,s)}  - w^{(d,s)} \right)^2
\end{equation}
where $\hat{\mathcal{E}}_p = (\bm{\alpha}_p, a_p, h_p)$ and $\bm{z}^{(d,s)} \in \mathbb{R}^P$, $w^{(d,s)}\in\mathbb{R}$ are defined as
\begin{equation}
\label{eq:d-cipher_z_w}
\begin{split}
    &z_{p}^{(d,s)} =  \int_{\Omega} h_p(\bm{x},\bm{\hat{u}}^{(d)}(\bm{x}))
    (-1)^{|\bm{\alpha}_p|} \partial^{\bm{\alpha}_p}(a_{p}(\bm{x})\phi_s(\bm{x}))  d\bm{x} \\
      &w^{(d,s)} = \int_{\Omega} g(\bm{x},\bm{\hat{u}}^{(d)}(\bm{x}))\phi_s(\bm{x})  d\bm{x}
\end{split}
\end{equation}
We show the full derivation in Appendix \ref{app:d-cipher}. $\bm{z}^{(d,s)}$ can be precomputed at the beginning of the algorithm without estimating the derivatives of the reconstructed fields. They can be easily calculated if the derivatives of the testing functions $\phi_s$ and the derivatives of $a_{p}$ can be analytically computed.

As the optimization problem in Equation \ref{eq:lstsq} has to be solved many times for different closed-form expressions $g$, it poses some unique challenges. As standard approaches are not sufficiently fast, we design a new heuristic algorithm to solve this problem, CoLLie, and describe it in the next section.
\vspace{-2mm}
\begin{figure}[h!]
\centering
  \includegraphics[trim={0cm 14cm 0cm 0cm},clip,scale=0.7]{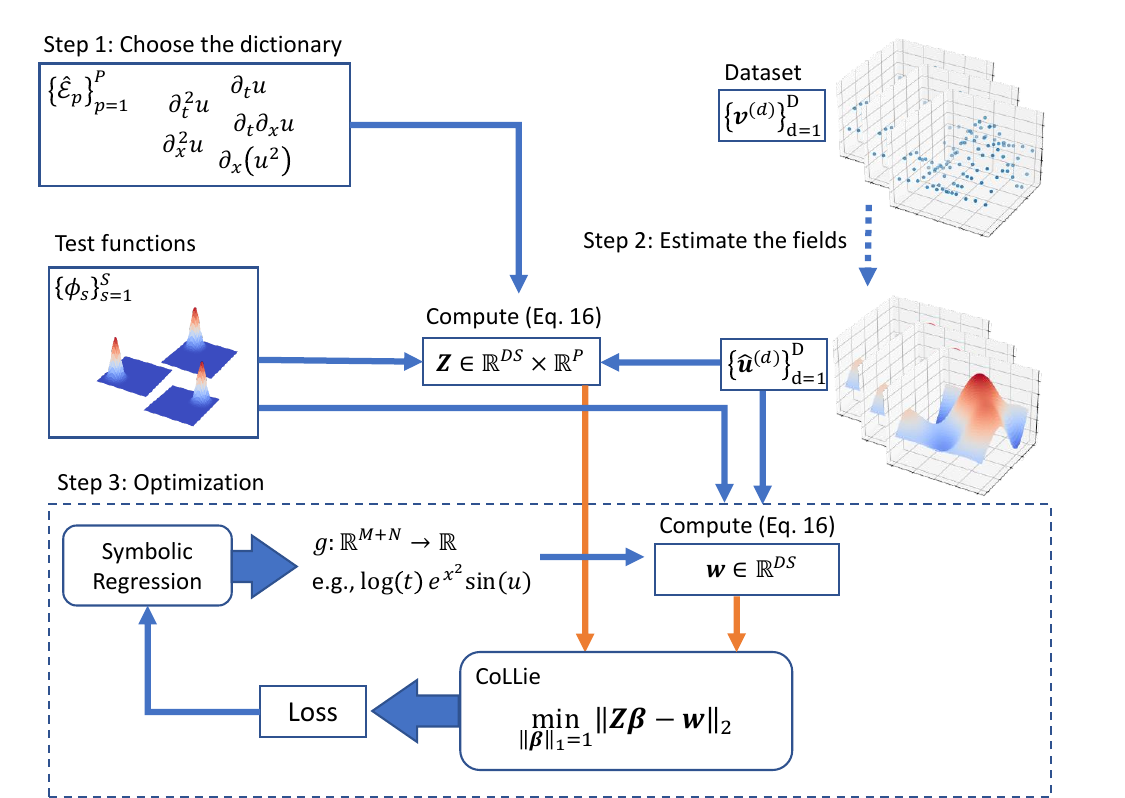}
  \caption{This diagram describes how the algorithm works. After the optimization procedure is finished, we get the best found closed-form function $g$ and use CoLLie to find the best vector $\bm{\beta}$. The found equation has the form $\sum_{p=1}^P \beta_p \hat{\mathcal{E}}_p[\bm{u}](\bm{x}) - g(\bm{x},\bm{u}(\bm{x})) = 0$
  }
  \label{fig:diagram_main_text}
\end{figure}
\vspace{-4mm}
\section{CoLLie}
\label{sec:collie}
The problem in Equation \ref{eq:lstsq} from the previous section can be formulated as follows. Given matrix $\bm{A} \in \mathbb{R}^{m\times n}$ and vector $\bm{b} \in \mathbb{R}^m$, find a vector $\bm{z}\in\mathbb{R}^n$ that minimizes $||\bm{A}\bm{z}-\bm{b}||_2^2$ such that $||\bm{z}||_1 = 1$. The task is challenging as the unit L1 sphere is not convex. A method that guarantees an optimal solution is based on an observation that the $(n-1)$-dimensional L1 sphere consists of $2^n$ $(n-1)$-simplices (which are convex). Minimizing $||\bm{A}\bm{z}-\bm{b}||_2^2$ on a simplex is a quadratic program \citep{boyd_convex_2004} with many available solvers \citep{andersen_cvxopt_2013,stellato_osqp_2020,aps_mosek_2019}. However, that means that the computation time scales \textit{exponentially} with the number of dimensions. This is prohibitively long for the inner optimization of our algorithm. 
Therefore, we design a heuristic algorithm CoLLie (\textbf{Co}nstrained \textbf{L}1 Norm \textbf{Lea}st Squares) that finds an approximate solution but is significantly faster (Figure \ref{fig:collie}). We observe that this optimization problem is related to the one encountered in LASSO. Denote $\bm{z_0}$ the solution that minimizes $||\bm{Az}-\bm{b}||_2^2$ (no constraints). If $||\bm{z_0}|| \geq 1$, the problem is equivalent to finding $\lambda$ (in the Lagrangian form of LASSO, Equation \ref{eq:app_lagrangian}) such that the LASSO solution has the norm 1. Least Angle Regression (LARS) \citep{efron_least_2004} is a popular algorithm used to minimize the LASSO objective that computes complete solution paths. These paths show how the coefficients of the solution change as $\lambda$ moves from $0$ to $\lambda_{max}$ (from no constraints to effectively imposing the solution to be $\bm{0}$). See Figure \ref{fig:app_collie} in Appendix \ref{app:collie_extending_lars}. CoLLie uses these solution paths to calculate the exact solution to the optimization problem. The case $0 < ||\bm{z}_0|| < 1$ is harder as it corresponds to $\lambda < 0$. CoLLie addresses this challenge by extending the solution paths generated by LARS beyond $\lambda=0$ for $\lambda < 0$. We assume that the paths continue to be piecewise linear and keep their slope (Figure \ref{fig:app_collie} in Appendix \ref{app:collie_extending_lars}). CoLLie then uses these assumptions to efficiently find an approximate solution. We provide a detailed description of CoLLie in Appendix \ref{app:collie}.

\begin{figure}[ht]
\centering
  \includegraphics[width=\textwidth]{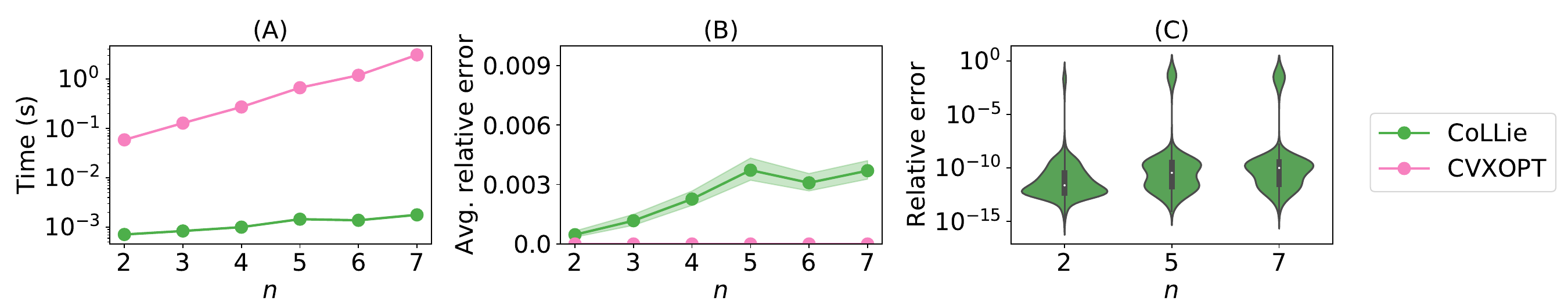}
  \vspace{-5mm}
  \caption{We compare CoLLie with an algorithm that uses CVXOPT \citep{andersen_cvxopt_2013} to solve each of the convex subproblems. We report the relative error between the loss obtained by CoLLie and the minimum loss achieved by CVXOPT. Panels \textbf{B} and \textbf{C} show the averages and the distributions of relative errors. The average relative error is below 0.005 and the bulk of the distribution is below $10^{-7}$. At the same time CoLLie is orders of magnitude faster (Panel \textbf{A}).}
  \label{fig:collie}. 
\end{figure}

\vspace{-5mm}
\section{Experiments}
\label{sec:experiments}
We perform a series of synthetic experiments to show how well D-CIPHER is able to discover some well-known differential equations\footnote{All experiment code can be found at \url{https://github.com/krzysztof-kacprzyk/D-CIPHER}} (Table \ref{tab:experimental-equations}). First, we demonstrate that D-CIPHER performs better than current methods when discovering PDEs in a linear combination form (Section \ref{sec:comparison}). Then we demonstrate it can discover PDEs with a closed-form $\partial$-free part that \textit{cannot} be expressed as a linear combination and thus are beyond the capabilities of current methods (Section \ref{sec:discovering-beyond}). We contrast D-CIPHER with its ablated version where the derivatives are estimated and the standard MSE loss is used instead of the variational loss (details in Appendix \ref{app:ablated-d-cipher}). For additional information about the experiments (e.g., implementation details, data generation, experimental settings) see Appendix \ref{app:experiments}.
\vspace{-2mm}
\begin{table}[h]
  \caption{Equations used in the experiments. "LC" column specifies if the equation can be represented as a linear combination (Equation \ref{eq:linear-combination}). "VR" column specifies if the PDE is Variational-Ready}

  \label{tab:experimental-equations}
  \centering
  \begin{tabular}{llll}
    \toprule
   Name & Equation & LC & VR  \\
    \midrule
 
     Homogeneous heat equation & $\partial_t u - \theta_1 \partial_x^2 u = 0$ & \cmark & \cmark \\
     
     Burger's equation & $\partial_t u + u \partial_x u - \theta_1 \partial_x^2 u = 0$ & \cmark & \cmark \\
     
     Kuramoto-Sivashinsky equation & $\partial_t u+\partial_x^2 u + \partial_x^4 u + u \partial_x u=0$ & \cmark & \cmark \\
     
     Forced and damped harmonic oscillator & $\partial_t^2 + 2\theta_1 \theta_2 \partial_t u + \theta_2^2 u = \theta_3 \sin(\theta_4 t)$ & \xmark & \cmark \\
     
     SLM model (Appendix \ref{app:slm}) & $\partial_t u + \partial_x u = -2e^{\theta_1 x} u$ & \xmark & \cmark \\
     
     Inhomogeneous heat equation & $\partial_t u - \theta_1 \partial_x^2 u = \theta_2 e^{\theta_3 t}$ & \xmark & \cmark \\
     
     Inhomogeneous wave equation& $\partial_t^2 u - \theta_1 \partial_x^2 u = \theta_2 e^t \sin(\theta_3 t)  $ & \xmark & \cmark \\
     
    \bottomrule
  \end{tabular}

\end{table}

\textbf{Evaluation metrics.}
To establish how well a discovered PDE matches the ground truth, we evaluate its $\partial$-free and $\partial$-bound parts separately. For the $\partial$-free part, we assign a binary variable indicating whether the correct functional form of the equation was recovered (please check Appendix \ref{app:correct_functional_form} for details). For the $\partial$-bound part, we measure the RMSE between the found coefficients of $\bm{\beta}$ and the target ones. We report the averages and standard deviations for both parts. We call the averages respectively \textbf{Success Probability} and \textbf{Average RMSE}.   

\textbf{Implementation.}
We use B-Splines \citep{de_boor_practical_1978} as the testing functions and we estimate the fields in Step 2 of D-CIPHER with a Gaussian Process \citep{williams_gaussian_2006}. The outer optimization in Step 3 is performed using a modified genetic programming algorithm \citep{koza_genetic_1992} and the inner optimization by CoLLie (Section \ref{sec:collie}). We also show additional experiments with different estimation algorithms in Appendix \ref{app:additional_discussion}.

\subsection{Discovering Linear Combinations: comparison with other methods}
\label{sec:comparison}

We compare D-CIPHER against two variants of PDE-FIND \cite{rudy_data-driven_2017} and WSINDy \cite{reinboldUsingNoisyIncomplete2020} (as implemented in PySINDy library \citep{Kaptanoglu2022,desilva2020}) with optimization performed by Stepwise Sparse Regression \citep{boninsegnaSparseLearningStochastic2018} or Forward Regression Orthogonal Least-Squares \citep{billingsNonlinearSystemIdentification2013}.  We note that D-CIPHER is specifically designed to discover PDEs that are beyond the capabilities of current methods, i.e., where the derivative-free part can be any closed-form expression. Current methods are usually tested on equations where the derivative-free part is trivial (identically equal to 0). Even though these algorithms are specialized to discover these simpler kinds of equations, D-CIPHER performs better than (or equally well as) PDE-FIND and WSINDy, regardless of the optimization algorithm, when tested on Burgers' equation the homogeneous heat equation, and Kuramoto–Sivashinsky equation (Figure \ref{fig:comparison_with_sindy}). This demonstrates gain from both the variational loss and the new optimization routine.
\begin{figure}[ht]
\centering
  \includegraphics[trim={0cm 12cm 0cm 0cm},clip,scale=0.41]{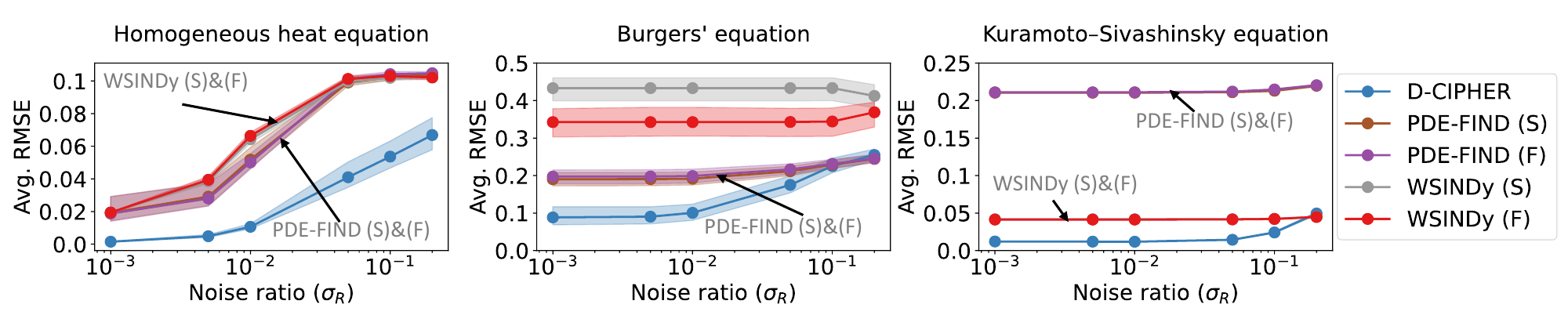}
  \vspace{-2mm}
 \caption{Simulation results for the Burgers' equation, homogeneous heat equation, and Kuramoto–Sivashinsky equation. We report the Average RMSE of the $\partial$-bound part of the equation. Note that some of the benchmarks overlap}
  \label{fig:comparison_with_sindy}. 
  \vspace{-5mm}
\end{figure}

\subsection{Discovering equations beyond current methods}
\label{sec:discovering-beyond}


\textbf{Forced and damped harmonic oscillator.} As the oscillator is described by a second-order ODE, it cannot be discovered by D-CODE \citep{qian_d-code_2022}. D-CIPHER discovers the correct functional form of the $\partial$-free part and achieves a low RMSE for the coefficients of $\bm{\beta}$ in most of the experimental settings. The performance is higher than or comparable to the ablated version of D-CIPHER, thus demonstrating gain from using the variational approach. We present the results in Figure \ref{fig:harmonic_oscillator}.

\begin{figure}[ht]

\centering
  \includegraphics[width=\textwidth]{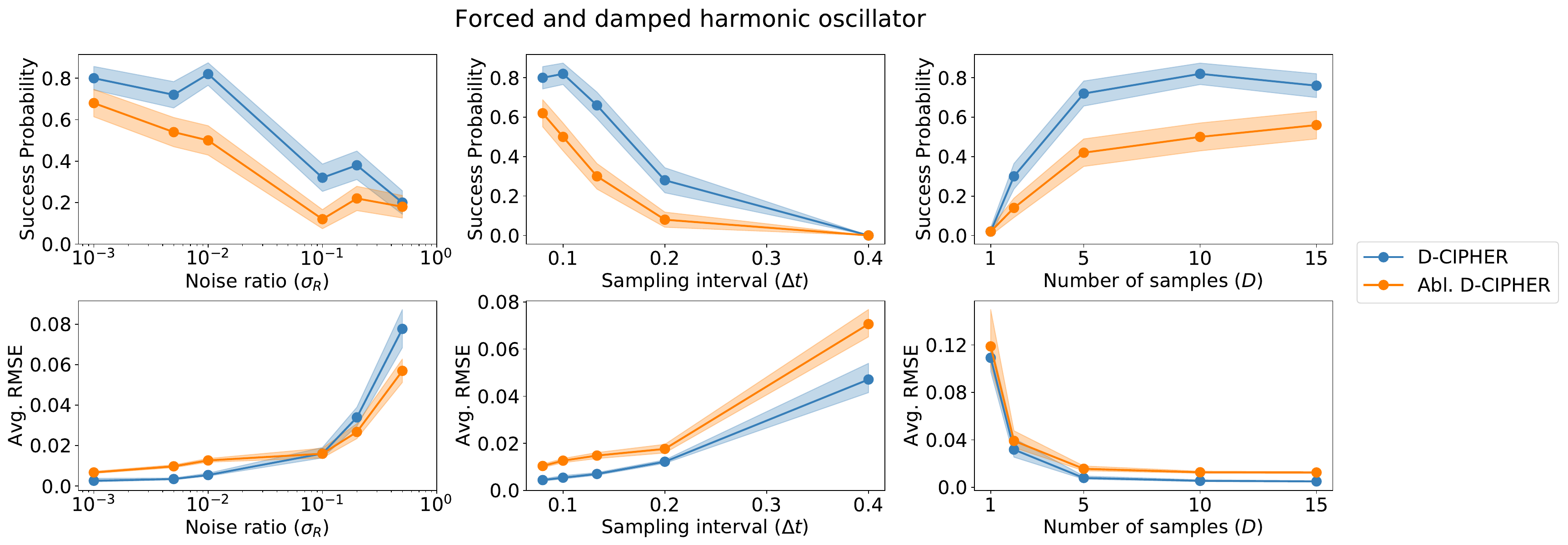}
  \vspace{-5mm}
  \caption{Success probability of discovering the correct $\partial$-free part of the equation and the average RMSE between the recovered $\partial$-bound part and the target one across different experimental settings. We compare D-CIPHER against its ablated version (Abl. D-CIPHER).
  \label{fig:harmonic_oscillator}
  }
\end{figure}




\textbf{Inhomogeneous heat equation.} D-CIPHER is able to discover the correct equation even in settings with very high noise. It performs better than the ablated version, thus showing the importance of the variational objective. The result are presented in Table \ref{tab:heat_equation}.

\begin{table}[h!]
\centering
\caption{We report the success probability of discovering the $\partial$-free part and the Average RMSE of the $\partial$-bound part for the inhomogeneous heat equation. Standard deviations shown in brackets.}
\resizebox{\textwidth}{!}{
\begin{tabular}{lcccccc}
\toprule
\multirow{2}{*}{Method} & \multicolumn{3}{c}{Success probability} & \multicolumn{3}{c}{Average RMSE} \\

                          & $\sigma_R=0.05$     & $0.1$ & $0.2$  & $\sigma_R=0.05$     & $0.1$ & $0.2$  \\
                          \midrule
D-CIPHER & 0.64 (.07) & 0.42 (.07) & 0.12 (.05) & 0.15 (.009) & 0.21 (.007) & 0.24 (.005)    \\
Ablated D-CIPHER & 0.46 (.07) & 0.20 (.06) & 0.04 (.03) & 0.18 (.009) & 0.24 (.008) & 0.27 (.007) \\
\bottomrule                    
\end{tabular}
}
\label{tab:heat_equation}
\end{table}


\textbf{Inhomogeneous wave equation.} This equation does not have the standard evolution form, as it does not involve the $\partial_t$ term. Thus, even without the source term, most of the current methods cannot be applied directly to discover this equation. In Figure \ref{fig:wave_equation} we show the absolute difference between the true field and the fields computed from the sources discovered by D-CIPHER and its ablated version across different measurement settings. D-CIPHER finds the correct functional form with coefficients not far from the ground truth. The ablated version fails to discover the correct functional form and the found $\partial$-free part does not reproduce the correct behavior of the equation.




\begin{figure}[h!]
\centering
  \includegraphics[trim={0cm 1.4cm 6.2cm 0.45cm},clip,scale=0.5]{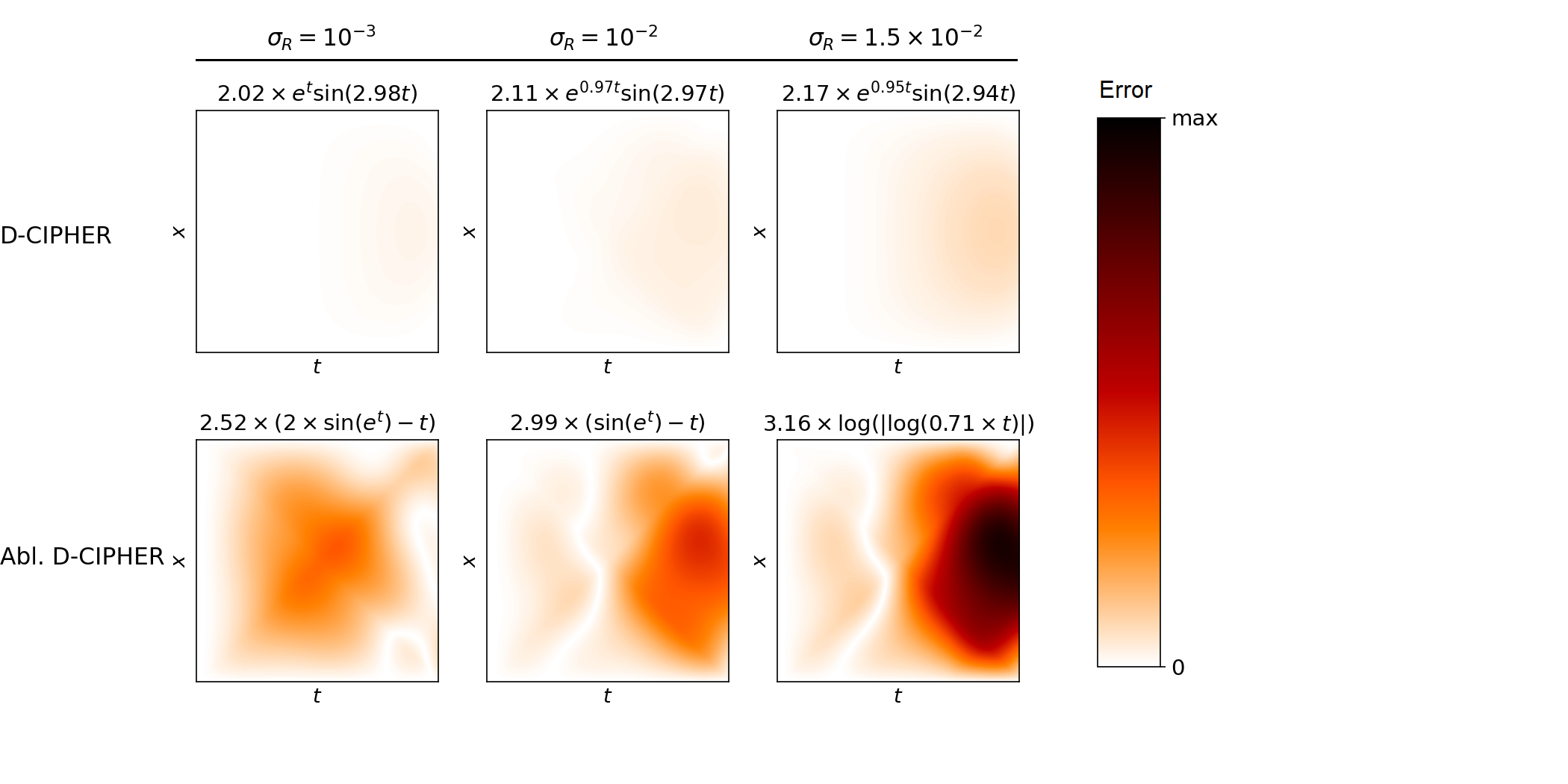}
  \caption{We solve the inhomogeneous wave equation for the $\partial$-free parts found by the D-CIPHER and its ablated version Abl. D-CIPHER. We show the absolute difference between the computed fields and the true field generated by $\partial$-free part $2\times e^t \sin(3t)$.
  }
  \label{fig:wave_equation}
\end{figure}


\vspace{-3mm}
\section{Discussion}
\label{sec:discussion}

\textbf{Applications.}
As D-CIPHER can potentially discover any closed-form $\partial$-free part, it is especially useful when this part of the PDE captures an essential component of the phenomenon. We demonstrate it by finding the heat and vibration sources as well as the driving force of an oscillator. Beyond the spatio-temporal physical equations, D-CIPHER might prove useful in discovering population models structured by age, size, and spatial position \citep{webb_theory_1985,webb_population_2008}, age-dependent epidemiological models \cite{hoppensteadt_age_1974}, and predator-prey models with age-structure \citep{promrak_predator-prey_2017}. All these equations are VR-PDEs where the $\partial$-free parts are crucial elements of the equations signifying the rates of mortality, infection, recovery, or growth. 
We believe that discovering closed-form equations for these systems would prove invaluable in understanding their behavior.

\textbf{Limitations and open challenges.}
D-CIPHER may fail in some scenarios, either due to \textit{challenging experimental settings} or a \textit{challenging underlying PDE}. Challenging experimental settings might include unobserved variables, high measurement noise, infrequent sampling, and inadequate domain (e.g., small time horizon). Challenging PDE forms might include a PDE outside of the VR-PDE class or a $\partial$-free part with a complex expression that is difficult to find. We note that we address some of these challenges by utilizing a variational approach, defining VR-PDEs to be a very general class of equations, and designing CoLLie, enabling a thorough search across closed-form expressions.

\textbf{Ethics Statement.} We want to emphasize that D-CIPHER was designed to facilitate the process of scientific discovery by extracting closed-form PDEs from data. It is not intended to or capable of replacing human experts in the modeling process. No human-derived data was used.


\section*{Acknowledgments and disclosure of funding}
This work is supported by Cancer Research UK and Roche. We want to thank Samuel Holt, Jonathan Crabbé, and anonymous reviewers for their useful comments and feedback on earlier versions of this work. We are grateful to Professor Yuanzhang Xiao for insightful discussions about the optimization algorithms.

\bibliography{references_new}

\newpage

\appendix

\section*{Table of supplementary materials}
\begin{enumerate}
\item Appendix \ref{app:notation_and_definitions}: notation and definitions
\item Appendix \ref{app:vr-pdes} variational formulation for linear PDEs and the proof of Theorem \ref{thm:main}
\item Appendix \ref{app:d-cipher}: details of the D-CIPHER framework, including pseudocode
\item Appendix \ref{app:collie}: details of the CoLLie algorithm
\item Appendix \ref{app:experiments}: details of experiments and the implementation
\item Appendix \ref{app:additional_discussion}: additional experiments and discussion
\end{enumerate}

\section{Notation and definitions}
\label{app:notation_and_definitions}
\subsection{Notation}
\label{app:notation}
\begin{table}[ht]
  \caption{Symbols used in this work}
  \label{tab:notation}
  \centering
  \begin{tabular}{ll}
    \toprule
   Symbol & Meaning  \\
    \midrule
    $[n]$ & a set of numbers $\{1,\ldots,n\}$ \\
    $\mathbb{N}$ & a set of natural numbers, i.e., $\{1,2,3,\ldots,\}$ \\
    $\mathbb{N}_0$ & a set of non-negative integers, i.e., $\{0,1,2,3,\ldots\}$ \\
    $M$ & the dimension of the domain of a vector field \\
    $N$ & the dimension of the codomain of a vector field \\
    $K$ & denotes the smoothness of functions or the maximum order of derivatives \\
    $D$ & the size of the dataset of observed fields \\
    $S$ & the number of testing functions \\
    $\Omega$ & an open set in $\mathbb{R}^M$ \\
    $\dot{u}(t)$ & the derivative of $u$ at $t$ \\
    $ \bm{\alpha}$ & a multi-index, an element of $\mathbb{N}_0^M$ \\
    $|\bm{\alpha}|$ & the order of $\bm{\alpha}$, $|\bm{\alpha}|=\sum_i^M \alpha_i$ \\
    $\partial_i^{\alpha_i}$ & $\alpha_i^{\text{th}}$-order partial derivative with respect to the $i^{\text{th}}$ variable \\
    $\partial^{\bm{\alpha}}$ & $\partial_1^{\alpha_1} \partial_2^{\alpha_2} \ldots \partial_M^{\alpha_M}$ \\
    $ \mathcal{C}^K$ & a set of functions with continuous partial derivatives $\partial^{\bm{\alpha}}$ for all $|\bm{\alpha}|\leq K$ \\
    $\mathcal{E}$ & an extended derivative, Definition \ref{def:extended_derivative_and_operator} \\
    $\mathcal{E}_{[P]}$ & an extended differential operator, Definition \ref{def:extended_derivative_and_operator} \\
    $\mathcal{F}$ & the functional used in the variational loss, Definition \ref{def:functional} \\
    $\mathcal{L}(\mathcal{E}_{[P]},g)$ & the variational loss, Equation \ref{eq:loss-function} \\
    $\mathcal{G}$ & a sampling grid, Definition \ref{def:observed_field_sampling_grid} \\
    $\bm{u}$ & a true field, Definition \ref{def:true_field} \\
    $\bm{v}$ & an observed field, Definition \ref{def:observed_field_sampling_grid} \\
    $\mathcal{D}$ & a dataset of observed trajectories \\ 
    $\epsilon$ & the noise \\
    $\mathcal{Q}$ & a dictionary of non-degenerate extended derivatives \\
    $\bm{\beta}$ & a vector describing the $\partial$-bound part of the VR-PDE \\
    $\sigma_R$ & a noise ratio \\
    $\Delta t$ & a sampling interval \\
    \bottomrule
  \end{tabular}
\end{table}

\subsection{Definitions}
\label{app:definitions}
In this section, we collect the definitions of some of the important terms used in the paper for easy reference.

\begin{definition}[Closed-form expressions and functions]
\label{def:closed-form}
A \textit{closed-form} expression is a mathematical expression that consists of a finite number of variables, constants, arithmetic operations, and certain well-known functions (e.g., logarithm, trigonometric functions). A function $f$ is called \textit{closed-form} if it can be represented by a closed-form expression. E.g., $f(x,y) = x^2\log(y)+\sin(3z)$.
\end{definition}

\begin{remark}
In practice, we do not want to consider any finite expression. Any symbolic regression algorithm penalizes expressions that are too long putting a soft constraint on the number of elements used. That is why deep neural networks are not considered closed-form even if they satisfy the conditions in Definition \ref{def:closed-form}.
\end{remark}

\begin{definition}[Multi-index]
An $n$-dimensional \textit{multi-index} $\bm{\alpha}$ is an $n$-tuple
\begin{equation*}
    \bm{\alpha} = (\alpha_1, \alpha_2, \ldots, \alpha_n)
\end{equation*}
where $\forall i\in [n] \: \alpha_i \in \mathbb{N}_0$. Thus $\bm{\alpha} \in \mathbb{N}_0^n$. We define the order of $\bm{\alpha}$ as $|\bm{\alpha}| = \sum_{i=1}^n \alpha_i$.
\end{definition}

\begin{definition}
For any $n$-dimensional multi-index $\bm{\alpha}$ we define a mixed derivative
\begin{equation*}
    \partial^{\bm{\alpha}} = \partial_1^{\alpha_1} \partial_2^{\alpha_2} \ldots \partial_n^{\alpha_n}
\end{equation*}
where $\partial_i^{\alpha_i} = \partial^{\alpha_i}/\partial x_i^{\alpha_i}$ is a $\alpha_i^{\text{th}}$-order partial derivative with respect to $x_i$ (the $i^{\text{th}}$ independent variable). We call $\partial^{\bm{\alpha}}$ \textit{non-trivial} if $|\bm{\alpha}|>0$. We denote the list of all non-trivial partial derivatives of $u$  up to order $K$ as $\partial^{[K]}u$.

\end{definition}

\begin{definition}[Closed-form Partial Differential Equation]
Let $f$ be a closed-form real smooth function. We say that a \textit{vector field} $\bm{u} : \Omega \rightarrow \mathbb{R}^N$ is \textit{governed} by a $K^{\text{th}}$-order closed-form PDE described by $f$ if
\begin{equation}
    f(\bm{x},\bm{u}(\bm{x}),\partial^{[K]}\bm{u}(\bm{x})) = 0 \ \forall \bm{x} \in \Omega
\end{equation}
where $\partial^{[K]}\bm{u}$ are all non-trivial mixed derivatives of all $u_j$ ($j \in [N]$) up to the $K^{\text{th}}$ order.
\end{definition}

\begin{definition}[Testing function]
Support of a function $\phi:\Omega\rightarrow \mathbb{R}$ is defined as
\begin{equation*}
    supp \ \phi = \overline{ \{\bm{x}\in\Omega : \phi(\bm{x}) \neq 0\}}
\end{equation*}
where $\overline{\mathcal{B}}$ is the topological \textit{closure} of $\mathcal{B}$ in $\Omega$.

$\phi$ is called a \textit{testing function} if it is a $\mathcal{C}^K$ function with \textit{compact} support.
\end{definition}

\begin{definition}[True Field]
\label{def:true_field}
We define a \textit{true field} on $\Omega$ as a vector valued function $\bm{u} : \Omega \rightarrow \mathbb{R}^N$ where each $u_j : \Omega \rightarrow \mathbb{R}$ is a $\mathcal{C}^K$ function.
\end{definition}
\begin{definition}[Observed field and sampling grid]
\label{def:observed_field_sampling_grid}
We define a \textit{sampling grid} $\mathcal{G}$ to be a finite subset of $\Omega$. Let $\bm{u}:\Omega \rightarrow \mathbb{R}^N$ be a true field on $\Omega$. 
An \textit{observed field sampled from} $\bm{u}$ on a grid $\mathcal{G}$ is a function $\bm{v} : \mathcal{G} \rightarrow \mathbb{R}^N$ of the form
\begin{equation*}
    v_j(\bm{x}) = u_j(\bm{x}) + \epsilon_j(\bm{x}) \: \forall \bm{x} \in \mathcal{G} \: \forall j \in [N] 
\end{equation*}
where $\epsilon_j(\bm{x})$ corresponds to \textit{noise}, a realisation of a zero-mean random variable.
\end{definition}

\begin{definition}[$L1$ sphere]
Let $n \in \mathbb{N}$. We define $n$-dimensional $L1$ sphere to be a subset of $\mathbb{R}^{n+1}$ defined as:
\begin{equation}
    \{ \bm{x} \in \mathbb{R}^{n+1} \ | \ ||x||_{1} = 1 \} \subset \mathbb{R}^{n+1}
\end{equation}
\end{definition}

\begin{definition}[Standard simplex]
Let $n \in \mathbb{N}$. We define standard $n$-simplex to be a subset of $\mathbb{R}^{n+1}$ defined as:
\begin{equation}
    \{ \bm{x} \in \mathbb{R}^{n+1} \ | \ \sum_{i=1}^{n+1} x_i = 1 \wedge x_i \geq 0 \ \forall i\in [n+1] \} \subset \mathbb{R}^{n+1}
\end{equation}
\end{definition}

\section{Variational-Ready PDEs}
\label{app:vr-pdes}

\subsection{Variational Formulation of PDEs}
\label{app:classical_var_pdes}
In this section, we provide the standard variational formulation of PDEs for linear PDEs \citep{friedlander_introduction_1982}.
\begin{definition}[Linear differential operator]
Let $\mathcal{A}$ be a finite set of multi-indices. A linear differential operator $L$ is defined as
\begin{equation*}
    L = \sum_{\bm{\alpha} \in \mathcal{A}} a_{\bm{\alpha}} \partial^{\bm{\alpha}}
\end{equation*}
where $a_{\bm{\alpha}}\in\mathcal{C}^K$ is a non-zero sufficiently smooth function of dependent variables. If $\max_{\bm{\alpha}\in\mathcal{A}}|\bm{\alpha}| = n$ then we call $L$ an $n^{\text{th}}$-order linear differential operator. If all $a_{\bm{\alpha}}$ are constants we say that $L$ has \textit{constant coefficients}.
 
 The \textit{adjoint} of $L$, denoted $L^{\dagger}$, is a linear differential operator defined as
 \begin{equation}
     L^{\dagger}u(\bm{x}) = \sum_{\bm{\alpha}\in\mathcal{A}} (-1)^{|\bm{\alpha}|} \partial^{\bm{\alpha}}(a_{\bm{\alpha}}(\bm{x})u(\bm{x}))
 \end{equation}

\end{definition}

\begin{proposition}[Variational Formulation of PDEs for linear PDEs]
\label{prop:var_form_pdes}
Let $K\in\mathbb{N}$. Consider a scalar field $u:\Omega \rightarrow \mathbb{R}$, such that $u\in\mathcal{C}^K$, a $K^{\text{th}}$-order linear differential operator $L$, and a continuous function $g:\Omega \rightarrow \mathbb{R}$. Let $\phi:\Omega \rightarrow \mathbb{R}$ be a testing function. Then $u$ satisfies a linear PDE
\begin{equation}
      L[u(\bm{x})]- g(\bm{x}) = 0 \: \forall \bm{x} \in \Omega
\end{equation}
if and only if
\begin{equation}
\label{eq:var_linear}
 \int_{\Omega} \left[ u(\bm{x}) L^{\dagger}\phi(\bm{x}) - g(\bm{x})\phi(\bm{x}) \right] d\bm{x} = 0
\end{equation}
for all testing functions $\phi:\Omega \rightarrow \mathbb{R}$.

Note that the integrals are always well-defined as $\phi$ has a compact support.
\end{proposition}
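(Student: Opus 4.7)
The plan is to prove the equivalence in two directions. The forward ($\Rightarrow$) implication reduces to repeated integration by parts against the test function $\phi$, and the reverse ($\Leftarrow$) follows from the classical fundamental lemma of the calculus of variations applied to a continuous integrand.

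For the forward direction, I would multiply the linear PDE $L[u](\bm{x}) - g(\bm{x}) = 0$ by an arbitrary testing function $\phi$ and integrate over $\Omega$. Expanding $L = \sum_{\bm{\alpha}\in\mathcal{A}} a_{\bm{\alpha}} \partial^{\bm{\alpha}}$ gives
\[
\sum_{\bm{\alpha}\in\mathcal{A}} \int_{\Omega} a_{\bm{\alpha}}(\bm{x})(\partial^{\bm{\alpha}}u)(\bm{x})\phi(\bm{x})\,d\bm{x} = \int_{\Omega} g(\bm{x})\phi(\bm{x})\,d\bm{x}.
\]
The heart of the argument is the identity
\[
\int_{\Omega} a_{\bm{\alpha}}(\bm{x})(\partial^{\bm{\alpha}}u)(\bm{x})\phi(\bm{x})\,d\bm{x} = (-1)^{|\bm{\alpha}|}\int_{\Omega} u(\bm{x})\,\partial^{\bm{\alpha}}\bigl(a_{\bm{\alpha}}(\bm{x})\phi(\bm{x})\bigr)\,d\bm{x},
\]
which I would obtain by applying integration by parts $|\bm{\alpha}|$ times, one coordinate direction at a time. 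Each pass contributes a factor of $-1$, and because $\phi$ has compact support strictly inside $\Omega$, so does $a_{\bm{\alpha}}\phi$ together with all its partial derivatives up to order $|\bm{\alpha}|-1$; hence every boundary term vanishes identically. Summing over $\bm{\alpha}\in\mathcal{A}$ collects the shifted derivatives into the adjoint $L^{\dagger}\phi$, yielding the desired variational identity.

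For the reverse direction, I would observe that the integration-by-parts manipulation above is purely algebraic and reversible: for any $u\in\mathcal{C}^K$ and any testing function $\phi$, the same computation shows
\[
\int_{\Omega} u(\bm{x})\,L^{\dagger}\phi(\bm{x})\,d\bm{x} = \int_{\Omega} L[u](\bm{x})\,\phi(\bm{x})\,d\bm{x}.
\]
Substituting into the hypothesis then yields $\int_{\Omega}(L[u](\bm{x}) - g(\bm{x}))\phi(\bm{x})\,d\bm{x} = 0$ for every testing function $\phi$. Since $u\in\mathcal{C}^K$ and each $a_{\bm{\alpha}}$ and $g$ are continuous, the integrand $L[u]-g$ is continuous on $\Omega$, and the fundamental lemma of the calculus of variations immediately forces $L[u](\bm{x}) - g(\bm{x}) = 0$ pointwise on $\Omega$.

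The main obstacle is the careful bookkeeping of the iterated integration by parts with variable coefficients: one must verify at every stage that the quantity being differentiated is smooth enough and that the resulting boundary term actually vanishes. The cleanest route is induction on $|\bm{\alpha}|$, treating one partial derivative at a time and invoking the compact support of $\phi$ inside $\Omega$ (which persists under multiplication by $a_{\bm{\alpha}}\in\mathcal{C}^K$ and under differentiation up to order $K$) to eliminate the boundary contributions. With this in place, the remaining appeal to the fundamental lemma is standard.
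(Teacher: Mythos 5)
Your proof is correct and follows essentially the same route the paper takes: the paper states this proposition as a standard result (citing Friedlander) without a dedicated proof, but its proof of the generalization (Theorem \ref{thm:main}) uses exactly your two ingredients — repeated integration by parts over the compact support of $\phi$ to shift $\partial^{\bm{\alpha}}$ onto $a_{\bm{\alpha}}\phi$ with a factor $(-1)^{|\bm{\alpha}|}$ and vanishing boundary terms, followed by the fundamental lemma of the calculus of variations applied to the continuous integrand $L[u]-g$ for the converse. Your proposal is a faithful specialization of that argument to the linear case.
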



\subsection{Theorem}
\label{app:theorem}

Before we prove the Theorem \ref{thm:main}, we need the following lemma, which is a particular formulation of the Fundamental lemma of calculus of variations \citep{elsgolc_calculus_2012}. We also need a generalized version of the divergence theorem \citep{loomis_advanced_1968}.
\begin{lemma}[Fundamental lemma of calculus of variations]
Let $K\in\mathbb{N}$, $\Omega$ be an open set in $\mathbb{R}^M$, and $u:\Omega \rightarrow \mathbb{R}$ be a continuous function. Then $u$ is equal to $0$ on the whole $\Omega$ if and only if $\int_{\Omega}u(\bm{x})\phi(\bm{x}) d\bm{x} = 0$ for all $\mathcal{C}^K$ functions $\phi:\Omega \rightarrow \mathbb{R}$ with compact support.
\end{lemma}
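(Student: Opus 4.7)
The forward direction is immediate: if $u\equiv 0$ on $\Omega$, then $u(\bm{x})\phi(\bm{x})=0$ pointwise for every test function $\phi$, hence the integral vanishes. The content of the lemma is the reverse implication, which I would prove by contradiction.

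Suppose the integral condition holds but $u$ is not identically zero on $\Omega$. Then there exists $\bm{x}_0 \in \Omega$ with $u(\bm{x}_0) \neq 0$; without loss of generality assume $u(\bm{x}_0) > 0$ (otherwise replace $u$ by $-u$, noting this does not change the hypothesis). Since $u$ is continuous and $\Omega$ is open, there exists $r>0$ such that the open ball $B(\bm{x}_0,r) \subset \Omega$ and $u(\bm{x}) > \tfrac{1}{2}u(\bm{x}_0) > 0$ for every $\bm{x} \in B(\bm{x}_0,r)$.

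The plan is now to exhibit a nonnegative test function $\phi$ supported inside $B(\bm{x}_0,r)$ that is strictly positive at $\bm{x}_0$, and derive a contradiction by integrating. A standard choice is the smooth bump
\begin{equation*}
    \phi(\bm{x}) = \begin{cases} \exp\!\left(-\dfrac{1}{r^2 - \|\bm{x}-\bm{x}_0\|_2^2}\right) & \text{if } \|\bm{x}-\bm{x}_0\|_2 < r, \\ 0 & \text{otherwise.} \end{cases}
\end{equation*}
This function is $\mathcal{C}^\infty$ on $\mathbb{R}^M$ (hence in particular $\mathcal{C}^K$), has compact support equal to the closed ball $\overline{B(\bm{x}_0,r)} \subset \Omega$, is strictly positive on $B(\bm{x}_0,r)$, and is nonnegative everywhere. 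Combining this with the bound on $u$ inside the ball gives
\begin{equation*}
    \int_{\Omega} u(\bm{x})\phi(\bm{x})\, d\bm{x} = \int_{B(\bm{x}_0,r)} u(\bm{x})\phi(\bm{x})\, d\bm{x} \geq \tfrac{1}{2}u(\bm{x}_0)\int_{B(\bm{x}_0,r)} \phi(\bm{x})\, d\bm{x} > 0,
\end{equation*}
where the last strict inequality uses that $\phi$ is continuous, nonnegative, and not identically zero on $B(\bm{x}_0,r)$. This contradicts the hypothesis that the integral vanishes for all admissible $\phi$, so $u$ must be identically zero on $\Omega$.

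The only genuinely nontrivial ingredient is the existence of the smooth compactly supported bump function, which is a standard construction (and the exhibit above suffices). Everything else is just continuity of $u$ plus positivity of the integrand. No subtleties arise from $K$ being finite rather than infinite, since the $\mathcal{C}^\infty$ bump is automatically in $\mathcal{C}^K$; and no subtleties arise from $\Omega$ being a general open subset of $\mathbb{R}^M$, because openness is exactly what guarantees that the ball $B(\bm{x}_0,r)$ fits inside $\Omega$ so that $\phi$ extends by zero to a globally defined test function on $\Omega$.
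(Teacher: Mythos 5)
Your proof is correct and follows essentially the same route as the paper's: contradiction via continuity to get a ball on which $u$ is bounded away from zero, then integration against the standard smooth bump function supported on that ball (the paper uses the same bump, merely written as a shift-and-scale of $e^{-1/(1-\|\bm{x}\|_2^2)}$). Your version is in fact slightly more careful, since you justify the without-loss-of-generality reduction to $u(\bm{x}_0)>0$ and write out the lower bound on the integral explicitly.
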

\begin{proof}
If $u$ is identically $0$ on $\Omega$ then all the integrals are trivially equal to $0$. 

We now prove the converse.

Let as assume for contradiction that there exists a point $\bm{x}_0 \in \Omega$ such that $u(\bm{x}_0) \neq 0$. Without loss of generality we assume $u(\bm{x}_0) = \epsilon > 0$. As $u$ is continuous there exists an open ball around $\bm{x}_0$ of radius $\delta$, denoted $B_{\bm{x}_0}^{\delta} = \{\bm{x} \in \Omega \ | \ ||\bm{x}-\bm{x}_0||_2 < \delta \}$, such that $\forall \bm{x}\in B_{\bm{x}_0}^{\delta} \ u(\bm{x}) > \epsilon/2 > 0$.

Now let $\phi$ be a $\mathcal{C}^K$ function that is positive on $B_{\bm{x}_0}^{\delta}$ and $0$ elsewhere. Such a function can always be created by appropriately shifting and scaling $\phi(\bm{x})=e^{-1/(1-||\bm{x}||_2^2)}\cdot \bm{1}_{\{||\bm{x}||_2 < 1\}}$. Its support is a closed ball $\bar{B}_{\bm{x}_0}^{\delta} = \{\bm{x} \in \Omega \ | \ ||\bm{x}-\bm{x}_0||_2 \leq \delta \}$ which is compact. Then
\begin{equation}
\int_{\Omega} u(\bm{x}) \phi(\bm{x}) d\bm{x} = \int_{B_{\bm{x}_0}^{\delta}} u(\bm{x}) \phi(\bm{x}) d\bm{x} > 0
\end{equation}
as both $u(\bm{x})$ and $\phi(\bm{x})$ are positive on $B_{\bm{x}_0}^{\delta}$. Thus we found a continuous function $\phi$ with compact support such that:
\begin{equation}
    \int_{\Omega}u(\bm{x})\phi(\bm{x}) d\bm{x} \neq 0
\end{equation}
Therefore $u$ is identically $0$ on $\Omega$.
\end{proof}

To make this section self-contained, we provide the statement of the generalized divergence theorem \citep{loomis_advanced_1968}.
\begin{theorem}[Divergence theorem]
Let  $\Omega$ be an open set in $\mathbb{R}^M$ and let $f,g$ be continuous on $\bar{\Omega}=\Omega \cup \partial \Omega$ and continuously differentiable on $\Omega$. Then
\begin{equation}
    \int_{\Omega} \partial_i^1[f(\bm{x})]g(\bm{x}) d\bm{x} = -  \int_{\Omega} f(\bm{x})\partial_i^1[g(\bm{x})] d\bm{x} +  \int_{\partial\Omega} \nu_i f(\bm{x})g(\bm{x}) d\bm{x}
\end{equation}
where $\bm{\nu}$ is a normal unit vector to the boundary $\partial \Omega$.
\end{theorem}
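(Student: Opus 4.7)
The plan is to reduce the claimed identity to the classical vector form of the divergence theorem applied to a judiciously chosen auxiliary vector field. First, I would set $\bm{F}:\bar{\Omega}\to\mathbb{R}^M$ to be the vector field with $i$-th component equal to $fg$ and all other components equal to zero, i.e.\ $\bm{F}(\bm{x}) = f(\bm{x})g(\bm{x})\bm{e}_i$, where $\bm{e}_i$ denotes the $i$-th standard basis vector. Under the stated hypotheses on $f$ and $g$, the field $\bm{F}$ is continuous on $\bar{\Omega}$ and continuously differentiable on $\Omega$, so it meets the hypotheses of the classical (vector) divergence theorem on $\Omega$.

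Second, I would evaluate $\nabla\cdot\bm{F}$ in the interior of $\Omega$. Since only the $i$-th component of $\bm{F}$ is nonzero, we have $\nabla\cdot\bm{F} = \partial_i^1[fg]$, and the ordinary product rule (valid because $f,g$ are $\mathcal{C}^1$ on $\Omega$) yields
\[
\nabla\cdot\bm{F}(\bm{x}) \;=\; \partial_i^1[f(\bm{x})]\,g(\bm{x}) \;+\; f(\bm{x})\,\partial_i^1[g(\bm{x})].
\]
Likewise, the outward flux on the boundary reduces to $\bm{F}\cdot\bm{\nu} = \nu_i fg$, since all non-$i$ components of $\bm{F}$ vanish identically.

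Third, I would invoke the classical divergence theorem on admissible $\Omega$ (as formulated in \citep{loomis_advanced_1968}) to obtain
\[
\int_\Omega \partial_i^1[f(\bm{x})]\,g(\bm{x})\, d\bm{x} \;+\; \int_\Omega f(\bm{x})\,\partial_i^1[g(\bm{x})]\, d\bm{x} \;=\; \int_{\partial\Omega} \nu_i\, f(\bm{x})\,g(\bm{x})\, d\bm{x},
\]
and simply rearrange to recover the stated identity. The main subtlety, and the place where care is needed, is the regularity of $\partial\Omega$: the surface integral and outward unit normal $\bm{\nu}$ must be defined, which tacitly requires $\partial\Omega$ to be piecewise $\mathcal{C}^1$ (or at least Lipschitz), matching the class of "admissible" domains treated in Loomis--Sternberg. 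If one had to prove the underlying divergence theorem from scratch, the standard route would be first to handle axis-aligned boxes via Fubini and the fundamental theorem of calculus, then to extend to general admissible domains using a partition of unity and local boundary flattening; here, however, the classical divergence theorem is invoked as a black box, so that additional machinery is not required as part of this proof.
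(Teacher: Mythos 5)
Your proposal is correct. Note, however, that the paper does not prove this statement at all: it is quoted verbatim as a known result from Loomis--Sternberg, included only ``to make this section self-contained,'' with the remark that in one dimension it reduces to integration by parts. So there is no paper proof to compare against; your argument is a standalone justification. The route you take --- applying the Gauss form $\int_\Omega \nabla\cdot\bm{F}\,d\bm{x} = \int_{\partial\Omega}\bm{F}\cdot\bm{\nu}\,d\bm{x}$ to the auxiliary field $\bm{F} = f g\,\bm{e}_i$, expanding $\partial_i^1[fg]$ by the product rule, and rearranging --- is the standard derivation of this integration-by-parts identity, and every step is sound under the stated $\mathcal{C}^1$ hypotheses. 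Your caveat about boundary regularity is well taken and in fact sharpens the paper: as stated, $\Omega$ is an arbitrary open set, for which $\partial\Omega$ need not carry a well-defined unit normal or surface measure, so the identity only makes sense for admissible (e.g.\ piecewise $\mathcal{C}^1$ or Lipschitz) domains, exactly as you observe. It is also worth noting that in the paper's actual use of this result (the proof of Theorem~\ref{thm:main}), the integration is restricted to the compact support $\mathcal{B}$ of the testing function $\phi$ and the boundary term vanishes because $\phi$ vanishes there, so the boundary-regularity issue does not bite in the application.
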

In a 1-dimensional setting, the statement of the theorem reduces to the integration by parts.

We can now prove Theorem \ref{thm:main}.
\begin{proof}
Let us denote $\mathcal{E}_p = (\bm{\alpha}_p, a_p, h_p)$. Then the PDE in Equation \ref{eq:VR} can be written as:
\begin{equation}
\label{eq:thm_proof_1}
    \sum_{p=1}^P a_p(\bm{x})\partial^{\bm{\alpha}_p}[h_p(\bm{x},\bm{u}(\bm{x}))] - g(\bm{x},\bm{u}(\bm{x})) = 0 \ \forall \bm{x}\in \Omega
\end{equation}
The LHS is continuous as all $a_p$ and $h_p$ are smooth, $g$ is continuous, $u\in\mathcal{C}^K$, and $|\bm{\alpha}_p|\leq K \ \forall p\in[P]$. Thus we can use the fundamental lemma of calculus of variations to say that the Equation \ref{eq:thm_proof_1} is true if and only if
\begin{equation}
\label{eq:thm_proof_2}
    \int_{\Omega} \left[  \sum_{p=1}^P a_p(\bm{x})\partial^{\bm{\alpha}_p}[h_p(\bm{x},\bm{u}(\bm{x}))] - g(\bm{x},\bm{u}(\bm{x}))  \right] \phi(x) d\bm{x} = 0
\end{equation}
for all testing functions $\phi$. We transform the LHS of Equation \ref{eq:thm_proof_2} using linearity to:
\begin{equation}
\label{eq:thm_proof_3}
    \sum_{p=1}^P \int_{\Omega}  a_p(\bm{x})\partial^{\bm{\alpha}_p}[h_p(\bm{x},\bm{u}(\bm{x}))]\phi(x) - \int_{\Omega}g(\bm{x},\bm{u}(\bm{x}))\phi(\bm{x}) d\bm{x}
\end{equation}
Let us now focus on
\begin{equation}
    \int_{\Omega}  a_p(\bm{x})\partial^{\bm{\alpha}_p}[h_p(\bm{x},\bm{u}(\bm{x}))]\phi(x)
\end{equation}
and let us denote $\bm{\alpha}_p = (\alpha_{p1},\ldots,\alpha_{pM})$. Then $\partial^{\bm{\alpha}_p} = \partial_1^{\alpha_{p1}}\ldots\partial_M^{\alpha_{pM}}$ and the expression can be written as
\begin{equation}
 \int_{\Omega} \partial_1^{\alpha_{p1}}\ldots\partial_M^{\alpha_{pM}}[h_p(\bm{x},\bm{u}(\bm{x}))] a_p(\bm{x})\phi(x) d\bm{x}
\end{equation}
Let us denote the support of $\phi$ as $\mathcal{B}$. As $\phi$ is equal to zero outside of its support, we can write the expression as
\begin{equation}
 \int_{\mathcal{B}} \partial_1^{\alpha_{p1}}\ldots\partial_M^{\alpha_{pM}}[h_p(\bm{x},\bm{u}(\bm{x}))] a_p(\bm{x})\phi(x) d\bm{x}
\end{equation}
Without loss of generality, let us assume that $\alpha_{p1}>0$. By the divergence theorem, this can be rewritten as
\begin{equation}
 -\int_{\mathcal{B}} \partial_1^{\alpha_{p1}-1}\ldots\partial_M^{\alpha_{pM}}[h_p(\bm{x},\bm{u}(\bm{x}))] \partial_1^{1}[a_p(\bm{x})\phi(x)] d\bm{x}
\end{equation}
because the integral over the boundary is equal to 0
\begin{equation}
     \int_{\partial\mathcal{B}} \nu_1 \partial_1^{\alpha_{p1}-1}\ldots\partial_M^{\alpha_{pM}}[h_p(\bm{x},\bm{u}(\bm{x}))] a_p(\bm{x})\phi(x) d\bm{x} = 0
\end{equation}
as $\phi$ has a compact support (and thus vanishes on the boundary). We can perform this operation $\alpha_{p1}$ times to shift the whole derivative $\partial_1^{\alpha_{p1}}$ to the second part of the equation and obtain
\begin{equation}
 (-1)^{\alpha_{p1}}\int_{\mathcal{B}} \partial_2^{\alpha_{p2}}\ldots\partial_M^{\alpha_{pM}}[h_p(\bm{x},\bm{u}(\bm{x}))] \partial_1^{\alpha_{p1}}[a_p(\bm{x})\phi(x)] d\bm{x}
\end{equation}
Then we repeat this for other derivatives and we end up with the following expression:
\begin{equation}
 (-1)^{\alpha_{p1}}\cdot\ldots\cdot(-1)^{\alpha_{pM}}\int_{\mathcal{B}} h_p(\bm{x},\bm{u}(\bm{x})) \partial_1^{\alpha_{p1}}\ldots\partial_M^{\alpha_{pM}}[a_p(\bm{x})\phi(x)] d\bm{x}
\end{equation}
As the integrand is zero outside of $\mathcal{B}$, this can be rewritten as:
\begin{equation}
 (-1)^{|\bm{\alpha}_p|}\int_{\Omega} h_p(\bm{x},\bm{u}(\bm{x})) \partial^{\bm{\alpha}_p}[a_p(\bm{x})\phi(x)] d\bm{x}
\end{equation}
or more compactly, using the functional defined in Definition \ref{def:functional}, as:
\begin{equation}
    \mathcal{F}(\mathcal{E}_p,\bm{u},\phi)
\end{equation}
Therefore Equation \ref{eq:thm_proof_3} can be written as:
\begin{equation}
     \sum_{p=1}^P \mathcal{F}(\mathcal{E}_p,\bm{u},\phi) -  \int_{\Omega}\left[ g(\bm{x},\bm{u}(\bm{x}))\phi(\bm{x}) \right] d\bm{x} 
\end{equation}

Thus, we proved that Equation \ref{eq:thm_proof_1} is true if and only if
\begin{equation}
     \sum_{p=1}^P \mathcal{F}(\mathcal{E}_p,\bm{u},\phi) -  \int_{\Omega}\left[ g(\bm{x},\bm{u}(\bm{x}))\phi(\bm{x}) \right] d\bm{x} = 0
\end{equation}
for all testing functions $\phi$.
\end{proof}

\subsection{Examples}
\label{app:vr-pdes_examples}
The examples of VR-PDEs can be found in Table \ref{tab:vr-pdes_examples}.
\begin{table}[ht]
  \caption{Examples of equations which are Variational-Ready}
  \vspace{2mm}
  \label{tab:vr-pdes_examples}
  \centering
  \begin{tabular}{llll}
    \toprule
   Name & Equation & Linear & VR  \\
    \midrule
   Damped wave eq. with a source & $u_{tt} + \rho u_t - \kappa \nabla^2 u = g(\bm{x}) $ & \cmark & \cmark \\
     
     Gauss law & $\nabla \cdot \bm{E} = \nicefrac{\rho}{\epsilon_0} $ & \cmark & \cmark \\
     
     Burger's equation & $u_t + uu_x - \nu u_{xx} = 0$ & \xmark & \cmark \\
     
     Navier-Stokes equations & $\bm{u}_t + (\bm{u} \cdot \nabla)\bm{u} - \nu \nabla^2 \bm{u} = -\nicefrac{1}{\rho}\nabla p + \bm{g}$ & \xmark & \cmark \\
     Korteweg-De Vries equation & $u_t + u_{xxx} - 6uu_x = 0$ & \xmark & \cmark \\
     Kuramoto-Sivashinsky equation & $u_t+u_{xx}+u_{xxxx}+uu_x=0$ & \xmark & \cmark \\
     Fisher's equation & $u_t - \kappa u_{xx} = ru(1-u)$ & \xmark & \cmark \\
     Liouville's equation & $u_{xx}+u_{yy} = \kappa e^{\rho u}$ & \xmark & \cmark \\
     Porous medium equation & $u_t - \nabla^2(u^{\kappa}) = 0$ & \xmark & \cmark \\
     Sine-Gordon equation & $u_{tt}-u_{xx} = -\sin(u)$ & \xmark & \cmark \\ 
    \bottomrule
  \end{tabular}
\end{table}

\section{D-CIPHER}
\label{app:d-cipher}

\subsection{Rewrite the inner optimization as a constrained least squares}
Let us rewrite the objective in Equation \ref{eq:opt_problem}.
\begin{equation}
\label{eq:app_d-cipher_loss}
\sum_{d=1}^D \sum_{s=1}^S \left( \sum_{p=1}^P \mathcal{F}(\beta_p \hat{\mathcal{E}}_p,\bm{\hat{u}}^{(d)},\phi_s) -  \int_{\Omega} g(\bm{x},\bm{\hat{u}}^{(d)}(\bm{x}))\phi_s(\bm{x}) d\bm{x} \right)^2
\end{equation}

First, let us observe that
\begin{equation}
    \mathcal{F}(\beta_p \hat{\mathcal{E}}_p,\bm{\hat{u}}^{(d)},\phi_s) = \int_{\Omega} h_p(\bm{x},\bm{\hat{u}}^{(d)}(\bm{x})) (-1)^{|\bm{\alpha}_p|} \partial^{\bm{\alpha}_p}[ \beta_p a_p(\bm{x})\phi_s(\bm{x})] d\bm{x} =  \beta_p z_p^{(d,s)}
\end{equation}
if we let $z_p^{(d,s)}\in\mathbb{R}$ be defined as
\begin{equation}
    z_{p}^{(d,s)} =  \int_{\Omega} h_p(\bm{x},\bm{\hat{u}}^{(d)}(\bm{x}))
    (-1)^{|\bm{\alpha}_p|} \partial^{\bm{\alpha}_p}(a_{p}(\bm{x})\phi_s(\bm{x}))  d\bm{x}
\end{equation}
Moreover, if we define $w^{(d,s)}\in\mathbb{R}$ as
\begin{equation}
  w^{(d,s)} = \int_{\Omega} g(\bm{x},\bm{\hat{u}}^{(d)}(\bm{x}))\phi_s(\bm{x})  d\bm{x}
\end{equation}
we can rewrite expression \ref{eq:app_d-cipher_loss} as
\begin{equation}
    \sum_{d=1}^D \sum_{s=1}^S \left( \sum_{p=1}^P \beta_p z_p^{(d,s)} - w^{(d,s)} \right)
\end{equation}
Now, the sum over $p$ can be written as a dot product between $\bm{z}^{(d,s)}\in\mathbb{R}^P$ and $\bm{\beta}\in\mathbb{R}^P$. We can also combine the sums over $d$ and $s$. We obtain
\begin{equation}
    \sum_{(d,s)\in[D]\times[S]} \left( \bm{\beta} \cdot \bm{z}^{(d,s)}  - w^{(d,s)} \right)^2
\end{equation}
which is exactly the same as the objective in Equation \ref{eq:lstsq}.

\subsection{Pseudocode}
The pseudocode of D-CIPHER is presented in Algorithm \ref{alg:d-cipher}.

\begin{algorithm}
\caption{D-CIPHER}\label{alg:d-cipher}
\begin{algorithmic}
\Require Observed fields $\mathcal{D}=\{\bm{v}^{(d)}\}_{d=1}^D$, grid $\mathcal{G}$
\Require Symbolic regression optimization algorithm $\mathcal{O}$
\Require Smoothing algorithm $\mathcal{S}$
\Require Testing functions $\{\phi_s\}_{s=1}^S$
\Require Dictionary $\mathcal{Q}=\{\hat{\mathcal{E}}_p\}_{p=1}^P$, $\hat{\mathcal{E}}_p=(\bm{\alpha}_p,a_p,h_p)$ \Comment{Step 1}
\Ensure Target PDE
\State $\bm{\hat{u}}^{(d)} = \mathcal{S}(\bm{v}^{(d)}) \ \forall d\in[D]$ \Comment{Step 2}

\State initialize matrix $\bm{Z} \in \mathbb{R}^{D\times S}\times \mathbb{R}^P$
\State $Z_{p}^{(d,s)} \gets  \int_{\Omega}h_p(\bm{x},\bm{\hat{u}}^{(d)}(\bm{x}))
    (-1)^{|\bm{\alpha}_p|} \partial^{\bm{\alpha}_p}(a_{p}(\bm{x})\phi_s(\bm{x}))  d\bm{x}$

\Procedure{Loss}{$g$}
\State initialize vector $\bm{w}\in\mathbb{R}^{D\times S}$
\State $w^{(d,s)} \gets \int_{\Omega} g(\bm{x},\bm{\hat{u}}^{(d)}(\bm{x}))\phi_s(\bm{x})  d\bm{x}$
\State $\bm{\beta} \gets \textsc{CoLLie}(\bm{Z},\bm{w})$ \Comment{Section \ref{sec:collie}}
\State $L = ||\bm{Z}\bm{\beta}-\bm{w}||_2^2$
\State \textbf{return} $L$
\EndProcedure
\State $g = \mathcal{O}(\textsc{Loss})$ \Comment{Step 3}

\State initialize vector $\bm{w}\in\mathbb{R}^{D\times S}$
\State $w^{(d,s)} \gets \int_{\Omega} g(\bm{x},\bm{\hat{u}}^{(d)}(\bm{x}))\phi_s(\bm{x})  d\bm{x}$
\State $\bm{\beta} \gets \textsc{CoLLie}(\bm{Z},\bm{w})$ \Comment{Section \ref{sec:collie}}
\State \textbf{return} $\sum_{p=1}^P \beta_p \hat{\mathcal{E}}_p[\bm{u}](\bm{x}) - g(\bm{x},\bm{u}(\bm{x})) = 0$
\end{algorithmic}
\end{algorithm}



\subsection{Testing functions}

Testing functions should satisfy the following conditions:
\begin{enumerate}
    \item Be sufficiently smooth (at least $\mathcal{C}^K$ for a $K^{\text{th}}$ order PDE)
    \item Compact support
    \item Derivatives can be computed analytically
    \item Orthonormal
\end{enumerate}

Conditions 1 and 2 follow directly from Definition \ref{def:functional}. Condition 3 is necessary because we do not want to estimate the derivatives of the testing functions. Condition 4 follows from the result obtained by \cite{qian_d-code_2022} that suggests that these functions should be a subset of an orthonormal basis of L2 space.

We use B-Splines \citep{de_boor_practical_1978} as the testing functions in our experiments because we can control their smoothness and the derivatives are easy to compute. We scale and shift them appropriately so that they are orthonormal.

Other testing functions are possible and examining them constitutes an interesting research direction. In particular, piecewise polynomials as defined by \cite{messenger_weak_2021} or various wavelets. Ideally, we would like to choose wavelets that form an orthonormal basis for the L2 space, such as
\begin{itemize}
    \item Shannon wavelets - smooth ($\mathcal{C}^{\infty}$) but not compact
    \item Meyer wavelets - smooth ($\mathcal{C}^{\infty}$) but not compact (better rate of decay than Shannon)
    \item Daubechies wavelets - smooth ($\mathcal{C}^{K}$ for a specified $K$) and compact but they do not have a closed-form expression.
\end{itemize}

Another interesting avenue of research would be to adapt the testing functions to the input data.


\section{CoLLie}
\label{app:collie}

\subsection{Lagrangian}
The problem that CoLLie is supposed to solve is a constrained least-squares optimization defined as:
\begin{equation}
\label{eq:app_collie_problem}
\begin{split}
    &\text{minimize  } ||\bm{A}\bm{z}-\bm{b}||_2^2 \\
    &\text{subject to  } ||\bm{z}||_1 - 1 = 0
\end{split}
\end{equation}
where $\bm{A} \in \mathbb{R}^{m\times n}$ has a full column rank, $\bm{b} \in \mathbb{R}^m$, and $\bm{z}\in\mathbb{R}^n$  for some $m,n\in \mathbb{N}$.

We consider the \textit{Lagrangian} $L:\mathbb{R}^n \times \mathbb{R} \rightarrow \mathbb{R}$ associated with this problem \citep{boyd_convex_2004} defined as 
\begin{equation}
\label{eq:app_lagrangian}
    L(\bm{z},\lambda) = ||\bm{A}\bm{z}-\bm{b}||_2^2 + \lambda(||\bm{z}||_1-1)
\end{equation}

Now let us define $\hat{\bm{z}}:\mathbb{R}\rightarrow\mathbb{R}^n$ as
\begin{equation}
\label{eq:app_collie_z_hat}
    \hat{\bm{z}}(\lambda) = \argmin_{\bm{z}\in\mathbb{R}^n}  L(\bm{z},\lambda) =\argmin_{\bm{z}\in\mathbb{R}^n} ||\bm{A}\bm{z}-\bm{b}||_2^2 + \lambda||\bm{z}||_1
\end{equation}

The goal of our algorithm is to find $\lambda^*\in\mathbb{R}$ such that $||\hat{\bm{z}}(\lambda^*)||_1 = 1$. Let us define a function $q:\mathbb{R}\rightarrow\mathbb{R}$ as
\begin{equation}
    q(\lambda) = ||\hat{\bm{z}}(\lambda)||_1
\end{equation}
The goal can be phrased as finding $\lambda^*\in\mathbb{R}$ such that $q(\lambda^*)=1$.

Let us note that $\hat{\bm{z}}(0)$ is just a solution to the ordinary least squares (OLS) problem with no constraints and its norm is $q(0)$.

\subsection{Extending LARS}
\label{app:collie_extending_lars}

\textbf{Case 1.} $q(0) \geq 1$.

If we assume that $\hat{\bm{z}}$ is continuous then $q$ is also continuous. From the continuity and the fact that $\lim_{\lambda\rightarrow +\infty} q(\lambda) = 0$ and $q(0) \geq 1$ we infer that there exists a $\lambda \geq 0$ such that $q(\lambda) = 1$. Moreover, for $\lambda \geq 0$ the problem in Equation \ref{eq:app_collie_z_hat} is the same as in LASSO \citep{tibshirani_regression_1996}. Therefore we just need to perform LASSO for different $\lambda$ and choose the one that gives the solution with L1 norm equal to 1.

To do it in practice we use Least Angle Regression (LARS) \citep{efron_least_2004}, a popular algorithm used to minimize the LASSO objective. It generates complete solution paths, i.e., a function $\bm{c}:\mathbb{R}_+\rightarrow\mathbb{R}^n$ defined as
\begin{equation}
    \bm{c}(\lambda) = \argmin_{\bm{z}\in\mathbb{R}^n} ||\bm{A}\bm{z}-\bm{b}||_2^2 + \lambda ||\bm{z}||_1
\end{equation}
which is equivalent to $\hat{\bm{z}}$ for $\lambda \geq 0$. An illustration of LARS solution paths can be seen in Figure \ref{fig:app_collie}. Each line corresponds to a function $c_i$ which describes the coefficient for the $i^{\text{th}}$ covariate. The paths are defined from some $\lambda_{0}$ where all $c_i(\lambda_0)=0$ to $\lambda=0$ where $\bm{c}(0) = \hat{\bm{z}}(0)$. In other words, the solution paths cover the whole range of constraints from the strictest, effectively imposing the L1 norm of $\bm{z}$ to be $0$, up to no constraints, solving the OLS problem.

The solution paths from the LARS algorithm are piecewise linear and the outputs are the values of the coefficients for points $(\lambda_0 > \ldots > \lambda_n = 0)$ where the slopes change. We calculate the norm at each of these points, $||\bm{c}(\lambda_i)||_1$, and find $j\in[n]$ such that $||\bm{c}(\lambda_{j-1})||_1 < 1 \leq ||\bm{c}(\lambda_{j})||_1$. As each $\bm{c}_i$ is a linear function on $[\lambda_{j},\lambda_{j-1}]$ and we know both $\bm{c}(\lambda_{j-1})$ and $\bm{c}(\lambda_{j})$, we can effectively search for $\lambda \in [\lambda_{j},\lambda_{j-1}]$ such that $||\bm{c}(\lambda)||_1 = 1$. The search can be performed by any root-finding algorithm. We use Brent's method \citep{brent_algorithms_2013}.

\textbf{Case 2.} $0 < q(0) < 1$.

This is much more difficult as it corresponds to solving the problem in Equation \ref{eq:app_collie_z_hat} for $\lambda < 0$. The solutions given by the LARS algorithm are too small. In fact, the solution with the biggest norm is $\bm{c}(0)=\hat{\bm{z}}(0)$, the OLS solution, with norm exactly $q(0) < 1$.

To address this challenge, we propose the following heuristic. We \textit{extend} the solution paths generated by LARS beyond $\lambda=0$ for $\lambda < 0$. We assume that the paths will continue to be piecewise linear and that they will keep the slope they have in the last interval $[\lambda_{n}=0,\lambda_{n-1}]$. Let us denote this slope as 
\begin{equation}
\Delta c_i = \frac{c_i(0)-c_i(\lambda_{n-1})}{0-\lambda_{n-1}}
\end{equation}
This is graphically represented in Figure \ref{fig:app_collie}. Formally, these extended paths, $\bm{\bar{c}}:\mathbb{R}\rightarrow \mathbb{R}$ are defined as:
\begin{equation}
    \bar{c}_i(\lambda) = 
    \begin{cases}
        c_i(\lambda), &\lambda \geq 0 \\
        c_i(0)+\lambda \Delta c_i, &\lambda < 0
    \end{cases}
\end{equation}

Now, we want to find $\lambda < 0$ such that $||\bm{\bar{c}}(\lambda)||_1 = 1$. To achieve this in practice, we first make the following observations.

For any $\lambda < 0$ we say that $\bar{c}_i(\lambda)$ is on the \textit{right side} if $\bar{c}_i(\lambda)\Delta c_i \leq 0$ and we say that $\bar{c}_i(\lambda)$ is on the \textit{wrong side} if $\bar{c}_i(\lambda)\Delta c_i > 0$. In other words, being on the wrong side just means that the path is yet to cross the x-axis if we keep decreasing $\lambda$. We can easily find $\lambda'$ such that for all $\lambda < \lambda'$ all $\bar{c}_i(\lambda)$ are on the right side (none of the paths will ever cross the x-axis).
\begin{equation}
    \lambda' = \min \left\{\frac{0-c_i(0)}{\Delta c_i} \ | \ i\in[n] \land c_i(0)\Delta c_i > 0 \right\}
\end{equation}

If $||\bm{\bar{c}}(\lambda')||_1 \geq 1$ we just need to search the interval $[\lambda',0]$ for $\lambda$ such that $||\bm{\bar{c}}(\lambda')||_1 = 1$. 

If $||\bm{\bar{c}}(\lambda')||_1 < 1$ then we need to search $\lambda < \lambda'$. However, by definition, for all $\lambda < \lambda'$, all $c_i(\lambda)$ are on the right side. That means $||\bm{\bar{c}}(\lambda)||_1$ as a function of $\lambda$ is just a linear function on the interval $(-\infty,\lambda')$. To see that, let us observe that
\begin{equation}
||\bm{\bar{c}}(\lambda)||_1 = \sum_{i=1}^n |\bar{c}_i(\lambda)| = \sum_{i=1}^n sign(\bar{c}_i(\lambda))\bar{c}_i(\lambda)
\end{equation}
Additionally, for $\lambda < \lambda'$ all $c_i(\lambda)$ are on the right side, so we have $sign(\bar{c}_i(\lambda)) = -sign(\Delta c_i)$. We can rewrite $||\bm{\bar{c}}(\lambda)||_1$ as:
\begin{equation}
\begin{split}
    ||\bm{\bar{c}}(\lambda)||_1 &=  \sum_{i=1}^n \left( -sign(\Delta c_i)(c_i(0)+\lambda \Delta c_i) \right)\\
    &= -\sum_{i=1}^n sign(\Delta c_i)c_i(0)  -\left(\sum_{i=1}^n sign(\Delta c_i) \Delta c_i\right)\lambda\\
    &= -\sum_{i=1}^n sign(\Delta c_i)c_i(0) -\left(\sum_{i=1}^n |\Delta c_i|\right)\lambda
\end{split}
\end{equation}
Therefore the solution can be found using the following equation
\begin{equation}
    \lambda^* = \lambda' + \frac{1-||\bm{\bar{c}}(\lambda')||_1}{-\sum_{i=1}^n |\Delta c_i|}
\end{equation}

\textbf{Case 3.} $q(0)=0$.
In that case, we just return a precomputed solution to the problem
\begin{equation}
    \begin{split}
    &\text{minimize  } ||\bm{A}\bm{z}||_2^2 \\
    &\text{subject to  } ||\bm{z}||_1 - 1 = 0
\end{split}
\end{equation}
which we compute by subdividing the problem into $2^n$ quadratic programs and solving each of them separately using CVXOPT algorithm \citep{andersen_cvxopt_2013} as described in Section \ref{sec:collie}.

\begin{figure}[ht]
\centering
  \includegraphics[width=\textwidth]{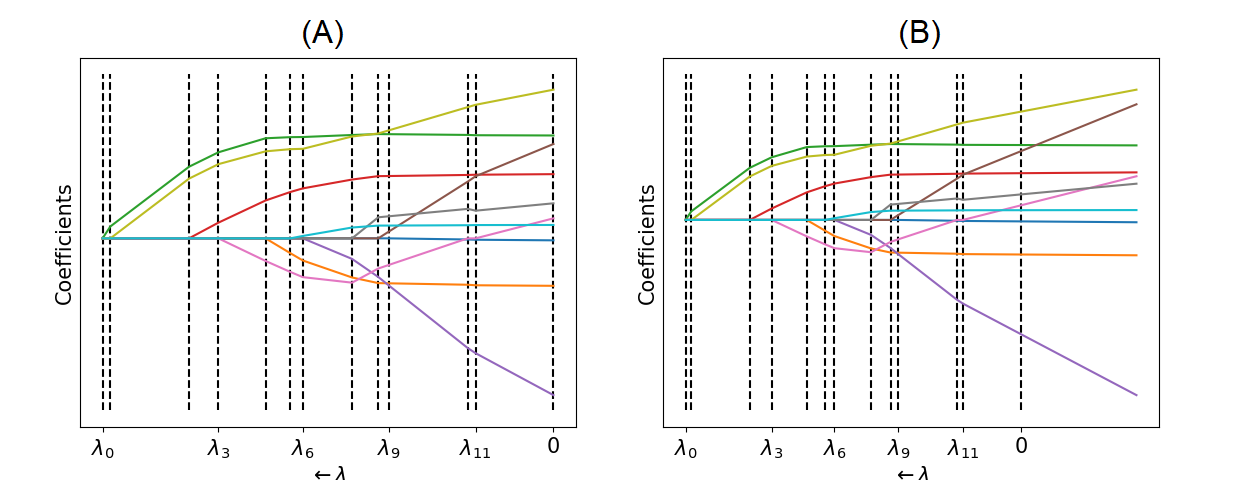} 
  \caption{Panel \textbf{A} shows and example of solution paths calculated by the LARS algorithm. Panel \textbf{B} shows their extended versions as defined in Case 2 in \ref{app:collie_extending_lars}. The x-axis is reversed, so $\lambda$ decreases as it moves to the right.}
  \label{fig:app_collie}
\end{figure}

\subsection{Comparison}

We perform a comparison between CoLLie and an algorithm based on CVXOPT \citep{andersen_cvxopt_2013} as described in Section $\ref{sec:collie}$. CoLLie uses different procedures depending on the problem, thus for a fair comparison we generate an equal number of tests falling under Case 1 and Case 2 (two main cases) as described in \ref{app:collie_extending_lars}. To achieve that, we generate a random $m\times n$ matrix $\bm{A}$ where each entry is sampled from the standard normal distribution. We use $m=1000$ and $n$ ranging from $2$ to $7$. We then generate a vector $\bm{\hat{z}} \in \mathbb{R}^n$ such that each entry is sampled from a uniform distribution on $[-0.5,0.5]$. Then $\bm{\hat{z}}$ is normalized to have L1 norm equal to $1$. We sample a number $l$ from a uniform distribution on $[-1,3]$ and multiply $\bm{\hat{z}}$ by $l$ to obtain $\bm{z}' = l\bm{\hat{z}}$. By this procedure, we are guaranteed that cases $||\bm{z}'||_1 < 1$ and $||\bm{z}'||_1 \geq 1$ are equally likely. We then let $\bm{b} = \bm{A}\bm{z}'$. The task is then to find $\bm{z}$ with L1 norm equal to $1$ that minimizes $||\bm{A}\bm{z}-\bm{b}||_2^2$. As $q(0) = ||\bm{z}'||_1 = l$, Case 1 and Case 2 are equally likely. 

We perform 1000 experiments for each $n$. As losses for optimal solutions can be on widely different scales, we report the relative error between the loss obtained by CoLLie and the loss obtained by the algorithm based on CVXOPT, which seems to always find the optimal solution.

The time is measured on a single computer with an Intel Core i5-6500 CPU (4 cores) and 16GB of RAM.

\section{Experiments}
\label{app:experiments}

\subsection{Ablated D-CIPHER}
\label{app:ablated-d-cipher}
The ablated version uses the standard MSE loss with estimated derivatives and thus solves the following optimization problem:
\begin{equation}
    \min_{g} \min_{||\bm{\beta}||_1} \sum_{d=1}^D \sum_{\bm{x}\in\mathcal{G}} \left(\sum_{p=1}^P \beta_p \hat{\mathcal{E}}_p[\bm{v}^{(d)}](\bm{x}) - g(\bm{x},\bm{v}^{(d)}(\bm{x})) \right)^2
\end{equation}
where $\bm{v}^{(d)}$ is the observed field, $\mathcal{G}$ is the sampling grid, and $\hat{\mathcal{E}}_p[\bm{v}^{(d)}](\bm{x})$ requires derivative estimation.

The ablated version uses the same symbolic regression algorithm to search over closed-form $g$ and CoLLie for the inner optimization.

\subsection{Implementation}
\textbf{Step 1.}
For the homogeneous heat equation and Burgers' equation we use dictionary $\mathcal{Q} = \{u,\partial_t u,\partial_x u, \partial_x(u^2), \partial_x^2 u, \partial_x^2(u^2) \}$. For Kuramoto-Sivashinsky equation we use dictionary $\mathcal{Q} = \{u,\partial_t u,\partial_x u, \partial_x(u^2), \partial_x^2 u, \partial_x^2(u^2), \partial_x^3 u, \partial_x^3(u^2), \partial_x^4 u, \partial_x^4(u^2) \}$. 
For the damped and forced harmonic oscillator we use the dictionary $\{\partial_t, \partial_t^2\}$, and for the wave and heat equations, we use $\{\partial_t, \partial_x, \partial_t^2, \partial_{t}\partial_x, \partial_{x}^2\}$.

\textbf{Step 2.}
Field estimation is performed using the Gaussian Process Regression from the Python library scikit-learn \citep{pedregosa_scikit-learn_2011}. The kernel is chosen to be the RBF kernel \citep{williams_gaussian_2006} with an added White kernel to account for noise. The observed field is initially standardized by subtracting the mean and dividing by the standard deviation. Then the GaussianProcessRegressor is fitted to the data. The estimated fields are generated by predicting the values of a trained Gaussian Process on a full integration grid and then scaling back to their original range (by multiplying by the standard deviation and adding the mean).

\textbf{Step 3.}
The search over the closed-form expression is performed using the symbolic regression library gplearn \citep{stephens_gplearn_2022}. We use a custom fitness function that solves the inner optimization problem in Equation \ref{eq:opt_problem}. This inner optimization is performed by CoLLie (Section \ref{sec:collie}). The integration is performed using Riemann sums.

\textbf{Ablated version of D-CIPHER.}
The derivative estimation is performed by first fitting a Gaussian process (in the same way as in Step 2) and then using the finite difference to estimate the derivative in one of the coordinates for all points in the sampling grid. To obtain higher-order derivatives, a Gaussian process is fitted again and the derivative is once again calculated using the finite difference (possibly in a different direction than the first time).

\subsection{Hyperparameters}
\label{app:hyperparameters}
\textbf{Gaussian process regression.}
The kernel parameters of the Gaussian Process are automatically adjusted during training. The default bounds of the length scale of the RBF kernel and the noise level of the White kernel are used, i.e., $(1e-5, 1e5)$.

\textbf{GPlearn.} We do not perform parameter tuning for the gplearn library and use the same parameters as in D-CODE \citep{qian_d-code_2022} except for the parsimony coefficient and the number of generations.

\begin{table}[ht]
  \caption{Hyperparameters used in gplearn}
  \vspace{2mm}
  \label{tab:gplearn_hyperparameters}
  \centering
  \begin{tabular}{ll}
    \toprule
   Hyperparameter & Value  \\
    \midrule
    population size & 15000 \\
    tournament size & 20 \\
    p crossover & 0.6903 \\
    p subtree mutation & 0.1330 \\
    p hoist mutation & 0.0361 \\
    p point mutation & 0.0905 \\
    generations & 20 and 30 \\
    \bottomrule
  \end{tabular}
\end{table}

The number of generations is chosen to be 30 for the damped and forced harmonic oscillator and 20 for the inhomogeneous heat and wave equations. 

Please check \citep{stephens_gplearn_2022} for the detailed description of these parameters.

We modify the implementation of the parsimony coefficient. The standard implementation adds to the loss the length of the equation multiplied by the parsimony coefficient. In our implementation, we increase the loss by the parsimony coefficient. This modification is performed because for different experiments we record the loss on widely different scales. To prevent tuning this parameter for every experimental setting we introduce a penalty that can work on different scales. The parsimony coefficient is chosen manually by performing experiments for a few values. The value used in the experiments is $0.05$.

The set of allowed mathematical operations is: $\{+, -, \times, \div, \sin, \exp, \log \}$

We want to emphasize that we use the same configuration of gplearn in D-CIPHER and its ablated version.

\textbf{Integration and number of testing functions.} For the damped and forced harmonic oscillator we use $10$ testing functions and the integration step $0.01$. For the inhomogeneous heat and wave equations we use $100$ testing functions and integrate on a grid with steps $\delta t=0.01$ and $\delta x=0.01$.

\textbf{Derivative estimation in the ablated version of D-CIPHER.} The Gaussian process is configured the same way as described above. The interval used in the finite difference method to estimate the derivative was chosen to be: $10^{-3}$.

\subsection{Choice of equations}

Equations used in Section \ref{sec:comparison} are canonical equations from physics that often appear in other works about PDE discovery \citep{rudy_data-driven_2017}. The homogeneous heat equation is a second-order PDE that models how heat diffuses through a region. It contains the dissipative term $\partial_x^2 u$. Burgers' equation is second-order PDE used, for instance, in fluid mechanics or nonlinear acoustics \citep{crightonModelEquationsNonlinear1979}. It contains the advection term $u\partial_x u$ and the diffusion term that prevents shock formation. Kuramoto-Sivashinsky equation is a fourth-order PDE used in modelling reaction-diffusion systems \citep{kuramotoInstabilityTurbulenceWavefronts1980} and is known for its chaotic behavior \citep{hymanKuramotoSivashinskyEquationBridge1986}.

In Section \ref{sec:discovering-beyond}, we chose equations of physical significance that have an interesting $\partial$-free part that is not a linear combination (as discussed in Section \ref{sec:pdes}). That makes them impossible to discover by the current methods. A forced and damped harmonic oscillator is a second-order ODE. Although D-CODE \citep{qian_d-code_2022} can discover any closed-form first-order ODE, it cannot be used to discover second-order ODEs. Thus there is currently no algorithm capable of discovering this equation. Inhomogeneous heat and wave equations are second-order PDEs where the $\partial$-free part is a source (of heat and wave respectively). Moreover, the wave equation does not have the standard evolution form, as it does not involve the $\partial_t$ term. Thus even without the source term, most of the current methods cannot be applied directly to discover this equation.

\subsection{Data generation}
\label{app:data_generation}

\textbf{Homogeneous heat equation.} The fields were generated by solving the equation $\partial_t u - \theta_1 \partial_x^2 u = 0$ ($\theta_1 = 0.25$) with Neumann boundary conditions $\partial_x u(t,0) = \partial_x u(t,X) = 0$ and an initial condition $u(0,x)=u_0(x)$, where $u_0$ is randomly sampled from a Gaussian process. The equation is solved using the implicit BTCS scheme \citep{kereyu_convergence_2016} with steps $\delta t=0.001$ and $\delta x=0.001$. The observed field is generated by sampling  $(t,x) 
\in [0,T] \times [0,X]$, evaluating the true field $u(t,x)$ and adding Gaussian noise. $T=2$ and $X=2$ are used in the experiments.

\textbf{Burger's equation.} The fields are computed by solving $\partial_t u + u\partial_x u - \theta_1 \partial_x^2 U = 0$ ($\theta_1 = 0.2$) with an initial condition $u(0,x)=u_0(x)$, where $u_0$ is randomly sampled from a Gaussian process, and with Dirichlet boundary conditions $u(t,0) = u_0(0)$, $u(t,X) = u_0(X)$. The equation is solved the using Crank-Nicolson scheme \citep{waniCrankNicolsonTypeMethod2013} with steps $\delta t = 0.002$ and $\delta x = 0.002$. The observed field is generated by sampling $(t,x)\in[0,T]\times[0,X]$, evaluating the field $u(t,x)$ and adding Gaussian noise. $T=2$ and $X=2$ are used in the experiments.

\textbf{Kuramoto-Sivashinsky equation.} The solution is the same as the one used in \citep{rudy_data-driven_2017}. The observed field is generated by sampling $(t,x)\in[0,T]\times[0,X]$, evaluating the field $u(t,x)$ and adding Gaussian noise. $T=100$ and $X=100$ are used in the experiments.

\textbf{Damped and forced harmonic oscillator.} The true fields are created by analytically solving the equation $\partial_{t}^2u(t) + 2\theta_1 \theta_2  \partial_t u(t) + \theta_2^2 u(t) = \theta_3 \sin(\theta_4 t)$, where $\theta_1=0.5,\theta_2=4.0,\theta_3=5.0,\theta_4=3.0$, with random initial conditions for $u(0)$ and $\partial_t u(0)$. The observed fields are then created by sampling $t\in[0,T]$, evaluating $u(t)$, and adding Gaussian noise. $T=2$ was used in the experiments.

\textbf{Inhomogeneous heat equation.} The true fields are computed by solving $\partial_t u(t,x) - \theta_1  \partial_x^2 u(t,x) = \theta_2 e^{\theta_3 t}$, where $\theta_1=0.25,\theta_2=1.25,\theta_3=1.8$, with Neumann boundary conditions $\partial_x u(t,0) = \partial_x u(t,X) = 0$ and an initial condition $u(0,x)=u_0(x)$, where $u_0$ is randomly sampled from a Gaussian process. The equation is solved using the implicit BTCS scheme \citep{kereyu_convergence_2016} with steps $\delta t=0.001$ and $\delta x=0.001$. The observed field is generated by sampling  $(t,x) 
\in [0,T] \times [0,X]$, evaluating the true field $u(t,x)$ and adding Gaussian noise. $T=2$ and $X=2$ are used in the experiments.

\textbf{Inhomogeneous wave equation.} The true fields are computed by solving $\partial_t^2 u(t,x) - \theta_1 \partial_x^2 u(t,x) = \theta_2 e^{t} \sin(\theta_3 t)$, where $\theta_1=1.0,\theta_2=2.0,\theta_3=3.0$, with Dirichlet boundary conditions $u(t,0) = u_0(0)$, $u(t,X) = u_0(X)$, where $u_0$ is randomly sampled from a Gaussian process and specifies the initial condition $u(0,x)=u_0(x)$. The equation is solved using the Implicit Difference Method \citep{mitchell_finite_1980} with steps $\delta t=0.001$ and $\delta x=0.001$ . The observed field was generated by sampling  $(t,x) \in [0,T] \times [0,X]$, evaluating the true field $u(t,x)$ and adding Gaussian noise. $T=2$ and $X=2$ were used in the experiments.

\subsection{Experimental settings}
\label{app:experimental_settings}

\textbf{Homogeneous heat equation}

Noise ratio ($\sigma_R$): $0.001$, $0.01$, $0.1$

Number of samples ($D$):  $10$

Domain ($\Omega$): $[0,2] \times [0.2]$

Grid ($\mathcal{G}$): $\{0,0.07,\ldots,2\} \times \{0,0.07,\ldots,2\}$

\textbf{Burgers' equation}

Noise ratio ($\sigma_R$): $0.001$, $0.01$, $0.1$

Number of samples ($D$):  $10$

Domain ($\Omega$): $[0,2] \times [0.2]$

Grid ($\mathcal{G}$): $\{0,0.1,\ldots,2\} \times \{0,0.1,\ldots,2\}$

\textbf{Kuramoto-Sivashinsky equation}

Noise ratio ($\sigma_R$): $0.001$, $0.01$, $0.1$

Number of samples ($D$):  $1$

Domain ($\Omega$): $[0,100] \times [0.100]$

Grid ($\mathcal{G}$): $\{0,1.67,\ldots,100\} \times \{0,1.67,\ldots,100\}$

\textbf{Damped and forced harmonic oscillator}

Default values are in bold.

Noise ratio ($\sigma_R$): $0.001$, $0.005$, $\bm{0.01}$, $0.1$, $0.2$, $0.5$

Number of samples ($D$):  $1$, $2$, $5$, $\bm{10}$, $15$

Domain ($\Omega$): $\bm{[0,2]}$

Grid ($\mathcal{G}$): $\{0,0.08,\ldots,2\}$, $\bm{\{0,0.1,\ldots,2\}}$, $\{0,0.13,\ldots,2\}$, $\{0,0.2,\ldots,2\}$, $\{0,0.4,\ldots,2\}$

\textbf{Inhomogeneous heat equation}

Noise ratio ($\sigma_R$): $0.05$, $0.1$, $0.2$

Number of samples ($D$):  $10$

Domain ($\Omega$): $[0,2] \times [0.2]$

Grid ($\mathcal{G}$): $\{0,0.07,\ldots,2\} \times \{0,0.07,\ldots,2\}$

\textbf{Inhomogeneous wave equation}

Noise ratio ($\sigma_R$): $0.001$, $0.01$, $0.015$

Number of samples ($D$):  $10$

Domain ($\Omega$): $[0,2] \times [0.2]$

Grid ($\mathcal{G}$): $\{0,0.07,\ldots,2\} \times \{0,0.07,\ldots,2\}$

\subsection{Benchmarks}

We use the implementation of PDE-FIND and WSINDy from the PySINDY library \citep{Kaptanoglu2022,desilva2020}.

An important hyperparameter in PDE-FIND and WSINDy is the library $\Theta$ used. We impose that $\mathcal{Q}$ and $\Theta \cup \{\partial_t u\}$ have the same number of elements. Moreover, solutions given by PDE-FIND and WSINDy are scaled to have the L1 norm equal to 1. Both of these measures are undertaken to ensure that RMSE error is comparable between the algorithms.

For the homogeneous heat equation and Burgers' equation we use a library 

$\Theta = \{u, \partial_x, \partial_x^2 u, u \partial_x u, u \partial_x^2 u \}$.

For the Kuramoto-Sivashinsky equation, we use a library 

$\Theta = \{u, \partial_x, \partial_x^2 u, \partial_x^3 u, \partial_x^4 u, u \partial_x u, u \partial_x^2 u, u \partial_x^3, u \partial_x^4 \}$.

In the experiments, we have not optimized for the derivative-free part as it is identically equal to 0 in all equations.

\subsection{Correct functional form}
\label{app:correct_functional_form}
To measure success probability we need to establish whether two closed-form functions match. The previous approach \citep{qian_d-code_2022} considered their \textit{functional forms}, i.e., expressions where all numeric constants are replaced by placeholders. By this measure, functions $\sin(3x)$ and $\sin(3.5x)$ match as they have the same functional form $\sin(Cx)$, where $C$ is a placeholder.

However, this definition is quite restrictive because  functions $\sin(3x)$, $\sin(3x) + 0.001$, $1.001\sin(3x)$, and $\sin(3x+0.001)$ all have different functional forms.

We consider it an open challenge to design a good metric that would meaningfully reflect whether the correct equation is discovered. We propose the following.

For a target function $f$, we consider its \textit{augmented form} $\tilde{f}$, defined as $\tilde{f}(x) = C_1 f(C_3 x + C_4) + C_2$, where all $C_i$ are placeholders. Then all numeric constants are turned into placeholders as well. In the end, we combine the constants. For instance, $C_1 + C_2$ becomes just $C_3$.

As an example, let us consider a function $f(x) = 1.3e^{2x}$. The augmented functional form is created in the following way:
\begin{enumerate}
    \item Augment: $C_1 \times 1.3e^{2\times (C_3 \times x + C_4)} + C_2$
    \item Replace: $C_1 \times C_5e^{C_6\times (C_3 \times x + C_4)} + C_2$
    \item Combine: $C_1 e^{C_3 x + C_4} + C_2$
\end{enumerate}

We perform this procedure for the target function. We can now take the standard functional form of the candidate function and check whether it matches the augmented functional form of the target function, taking into account that some of the constants might not be present in the candidate expression.

To aid in this procedure, we use a Python library for symbolic mathematics, SymPy \citep{meurer_sympy_2017}.

\subsection{Computation time}
The average computation time for a single experiment with the damped and forced harmonic oscillator is 281 seconds with a standard error of 4.5 seconds. The average computation time for a single experiment with an inhomogeneous heat equation is 68 minutes with a standard error of 38 seconds. This time is measured on a single computer with an Intel Core i5-6500 CPU (4 cores) and 16GB of RAM.

The experiments are run simultaneously on 5 computers like the one described above. The total time for all experiments (all seeds, all equations, all experimental settings, and both versions of D-CIPHER) is 65 hours. 

\subsection{Licenses}
The licenses of the software used in this work are presented in Table \ref{tab:licenses}
\begin{table}[ht]
  \caption{Software used and their licenses}
  \vspace{2mm}
  \label{tab:licenses}
  \centering
  \begin{tabular}{ll}
    \toprule
   Software & License  \\
    \midrule
    gplearn & BSD 3-Clause "New" or "Revised" License\\
    cvxopt & GNU General Public License\\
    cvxpy & Apache License\\
    sympy & New BSD License\\
    scikit-learn & BSD 3-Clause "New" or "Revised" License\\
    numpy & liberal BSD license\\
    pandas & BSD 3-Clause "New" or "Revised" License\\
    scipy & liberal BSD license \\
    python & Zero-Clause BSD license \\
     pysindy & MIT License \\
    \bottomrule
  \end{tabular}
\end{table}

\section{Additional experiments and discussion}\label{app:additional_discussion}

\subsection{Sharpe-Lotka-McKendrick model}
\label{app:slm}

We test D-CIPHER on a population model called the Sharpe-Lotka-McKendrick model \cite{webb_theory_1985}. The model is described by the following equation
\begin{equation}
\label{eq:slm}
    \partial_t u(t,a) + \partial_a u(t,a) + m(a)u(t,a) = 0
\end{equation}
where $m(a)$ is age-specific mortality rate. We choose $m(a) = 2e^{\theta a}$. We set $\theta = 1.5$ in the experiments.

We note that the derivative-free part of the target PDE cannot be expressed as a linear
combination of functions from a finite dictionary if the parameters are not known a priori. D-CIPHER is uniquely positioned among other discovery algorithms as the only technique that can recover any mortality rate that can be represented as a closed-form expression. We show a comparison between D-CIPHER and the Ablated D-CIPHER in Table \ref{tab:slm}

\begin{table}[h!]
\centering
\caption{Simulation results for the Sharpe-Lotka-McKendrick model. We report the success probability of discovering the $\partial$-free part and the Average RMSE of the $\partial$-bound part. Standard deviations are shown in brackets}
\begin{tabular}{lcccc}
\toprule
\multirow{2}{*}{Method} & \multicolumn{2}{c}{Sucess Probability} & \multicolumn{2}{c}{Average RMSE)} \\

                          & $\sigma_R=0.001$     & $0.01$  & $\sigma_R=0.001$     & $0.01$  \\
                          \midrule
D-CIPHER & 0.6 (0.15) & 0.5 (0.16) & 0.007 (0.0008) & 0.008 (0.0011) \\
Abl. D-CIPHER & 0.2 (0.13) & 0.2 (0.13) & 0.017 (0.0009) & 0.017 (0.0008) \\
\bottomrule                    
\end{tabular}
\label{tab:slm}
\end{table}

\subsection{Discovering systems}
\label{app:discovering_systems}

Discovering systems of PDEs is a much harder problem than discovering a single PDE. One of the issues is the fact that we would call \textit{indeterminism}. It follows from the following fact. If a vector field $\bm{u}$ is a solution to two differential equations $f_1$ and $f_2$, i.e.,
\begin{equation}
\begin{split}
    f_1(\bm{x},\bm{u}^{(d)}(\bm{x}),\partial^{[K]}\bm{u}^{(d)}(\bm{x})) = 0 \ \forall \bm{x} \in \Omega \\
    f_2(\bm{x},\bm{u}^{(d)}(\bm{x}),\partial^{[K]}\bm{u}^{(d)}(\bm{x})) = 0 \ \forall \bm{x} \in \Omega
\end{split}
\end{equation}
then it is also a solution to any linear combination of these equations, i.e.,
\begin{equation}
    \lambda_1 \times f_1(\bm{x},\bm{u}^{(d)}(\bm{x}),\partial^{[K]}\bm{u}^{(d)}(\bm{x})) + \lambda_2 \times f_2(\bm{x},\bm{u}^{(d)}(\bm{x}),\partial^{[K]}\bm{u}^{(d)}(\bm{x})) = 0 \ \forall \bm{x} \in \Omega 
\end{equation}
for any $\lambda_1, \lambda_2 \in \mathbb{R}$. Moreover, equations can sometimes be differentiated to yield more equations.

Let us take as an example the Cauchy-Riemann equations defined as:
\begin{equation}
    \begin{split}
        \partial_x u_1 - \partial_y u_2 = 0
        \partial_x u_2 + \partial_y u_1 = 0
    \end{split}
\end{equation}
Let us assume that we have a true vector field $(u_1,u_2)$ that satisfies both equations. Then the following equations are also satisfied
\begin{equation}
    \begin{split}
         \partial_x u_1 - \partial_y u_2 + \partial_x u_2 + \partial_y u_1 = 0 \\
         \partial_x u_1 - \partial_y u_2 - \partial_x u_2 - \partial_y u_1 = 0
    \end{split}
\end{equation}
We can also differentiate the Cauchy-Riemann equations to arrive at the Laplacian equations for $u_1$ and $u_2$.
\begin{equation}
	\begin{split}
		\partial_x^2 u_1 + \partial_y^2 u_1 = 0 \\
		\partial_x^2 u_2 + \partial_y^2 u_2 = 0
	\end{split}
\end{equation}

We could also combine the first-order equations with second-order equations or consider even higher-order derivatives. Although all these equations are compatible with our vector field $(u_1,u_2)$, not all of them are equally desirable to discover. That is why we believe that in any algorithm for discovering systems of differential equations substantial expert knowledge or inductive biases have to be encoded to guide the algorithm into the right equations.

Current methods do not consider systems of equations or consider a system of equations of a very particular form. In the latter case, each equation models a derivative with respect to time of a different scalar field. The system is assumed to look like this:
\begin{equation}
\label{eq:system_standard_type}
	\begin{split}
		&\partial_t u_1 = f_1(\bm{x},\bm{u}(\bm{x}),\partial^{[K]}\bm{u}(\bm{x})) \\ 
		&\partial_t u_2 = f_2(\bm{x},\bm{u}(\bm{x}),\partial^{[K]}\bm{u}(\bm{x})) \\
		&\cdots \\
		&\partial_t u_L = f_L(\bm{x},\bm{u}(\bm{x}),\partial^{[K]}\bm{u}(\bm{x}))
	\end{split}
\end{equation}

In addition, the LHS of these equations is often assumed to only contain spatial derivatives.

D-CIPHER can be used to discover some systems of differential equations if enough prior knowledge is provided in the choice of the dictionary $\mathcal{Q}$. Moreover, the discovered equations are not required to have a particular evolution form as is the case in current approaches.

Firstly, we note that D-CODE \cite{qian_d-code_2022} has been shown to discover a system of equations that looks like Equations \ref{eq:system_standard_type} when all equations are first-order ODEs. As D-CIPHER reduces to D-CODE when applied to first-order ODEs, it is also capable of discovering such a system.

We demonstrate that D-CIPHER is able to discover both Cauchy-Riemann equations if we use two different dictionaries. Each of the dictionaries yields a different equation. Additionally, it is a well-known fact that if a vector field satisfies Cauchy-Riemann equations then the constituent scalar fields are harmonic, i.e., they satisfy Laplace's equation. Based on the same dataset we are also able to discover both Laplace's equations given another set of two different dictionaries. We note that Laplace's equation does not contain $\partial_x$ term, so most of the current methods cannot be directly applied to discover this equation. The results are presented in Figure \ref{fig:cauchy-riemann}. The dictionaries used to discover each of the equations are the following.

For $\partial_x u_1 - \partial_y u_2 = 0$ we use $\mathcal{Q}_1 = \{\partial_x u_1, \partial_y u_2, \partial_x^2 u_1, \partial_y u_2\}$

For  $\partial_x u_2 + \partial_y u_1 = 0$ we use $\mathcal{Q}_2 = \{\partial_x u_2, \partial_y u_1, \partial_x^2 u_2, \partial_y^2 u_1 \}$

For $\partial_x^2 u_1 + \partial_y^2 u_1 = 0$ we use $\mathcal{Q}_3 = \{\partial_x u_1, \partial_y u_1, \partial_x^2 u_1, \partial_y^2 u_1\}$

For $\partial_x^2 u_2 + \partial_y^2 u_2 = 0$ we use $\mathcal{Q}_4 = \{\partial_x u_2, \partial_y u_2, \partial_x^2 u_2, \partial_y^2 u_2\}$

In the experiments, we have not optimized for the derivative-free part as it is identically equal to 0 in all equations.

\begin{figure}[ht]
\centering
  \includegraphics[width=\textwidth]{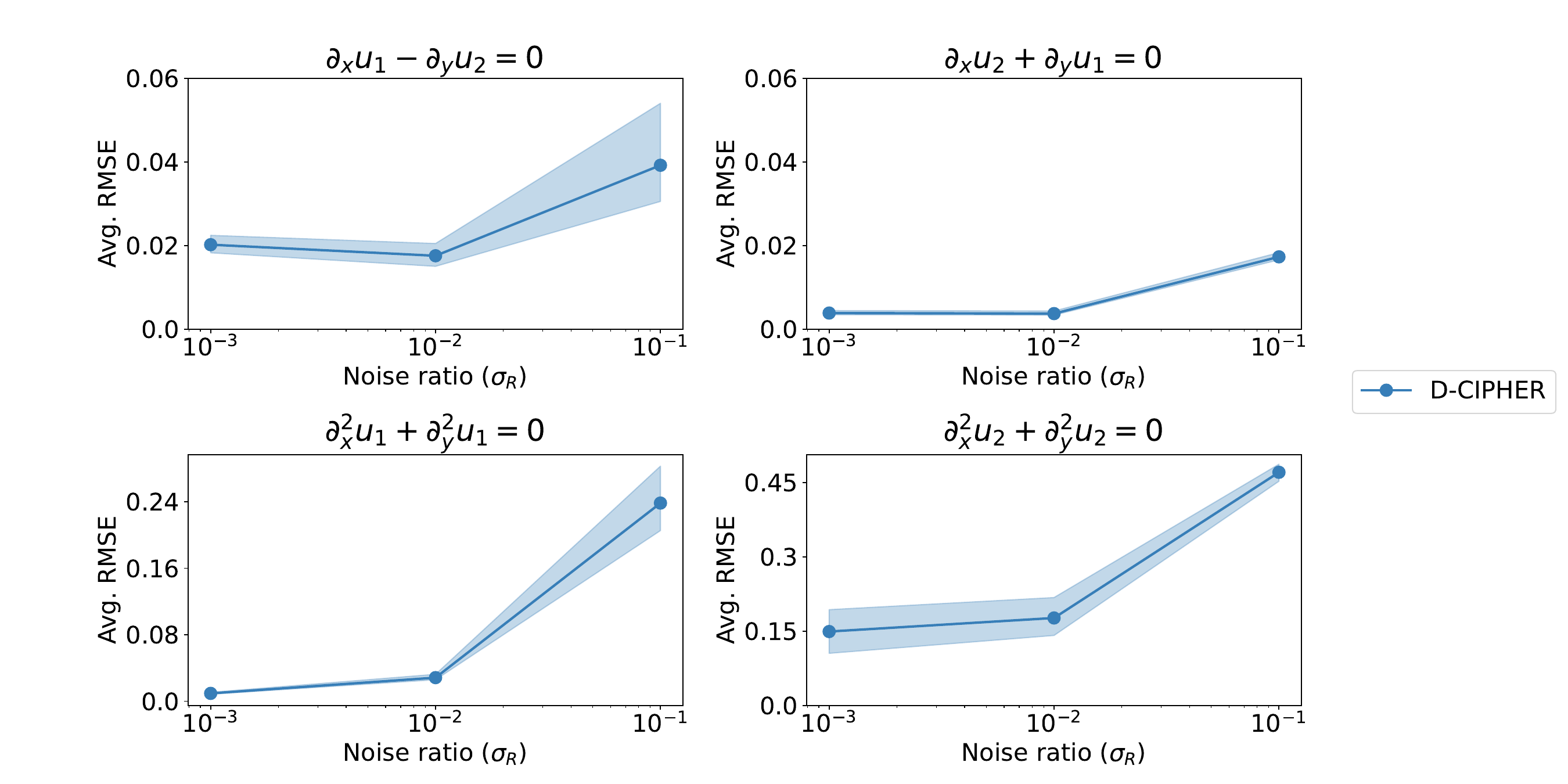}
  \caption{Simulation results for both Cauchy-Riemann equations and two Laplace's equations. We report the Average RMSE of the $\partial$-bound part in different noise settings}
  \label{fig:cauchy-riemann}. 
\end{figure}

\subsection{Increasing the size of the dictionary $\mathcal{Q}$}
\label{app:increasing_dictionary}

We test D-CIPHER on the Sharpe-Lotka-McKendrick model (Equation \ref{eq:slm}) with different dictionaries. We start with a small dictionary $\mathcal{Q}_1 = \{\partial_t u, \partial_a u\}$, and we create every new dictionary from the previous one by adding one more extended derivative. The final dictionary contains 10 elements, $\mathcal{Q}_{10}=\{\partial_t u, \partial_a u, \partial_a^2 u, \partial_t^2 u, \partial_t\partial_a u, \partial_a (u^2), \partial_a^2 (u^2), \partial_t (u^2), \partial_t^2 (u^2), \partial_t (u^3), \partial_a (u^3)\}$. The Average RMSE of the $\partial$-bound part is shown in Figure \ref{fig:many_dicts}. We do not observe any increase in average error. Note that in these experiments, we just focus on the $\partial$-bound part and do not optimize the $\partial$-free part. The relationship between the computation time and the size of the dictionary is represented in Figure \ref{fig:computation_time_dictionary}.

\begin{figure}[ht]
\centering
  \includegraphics[scale=0.4]{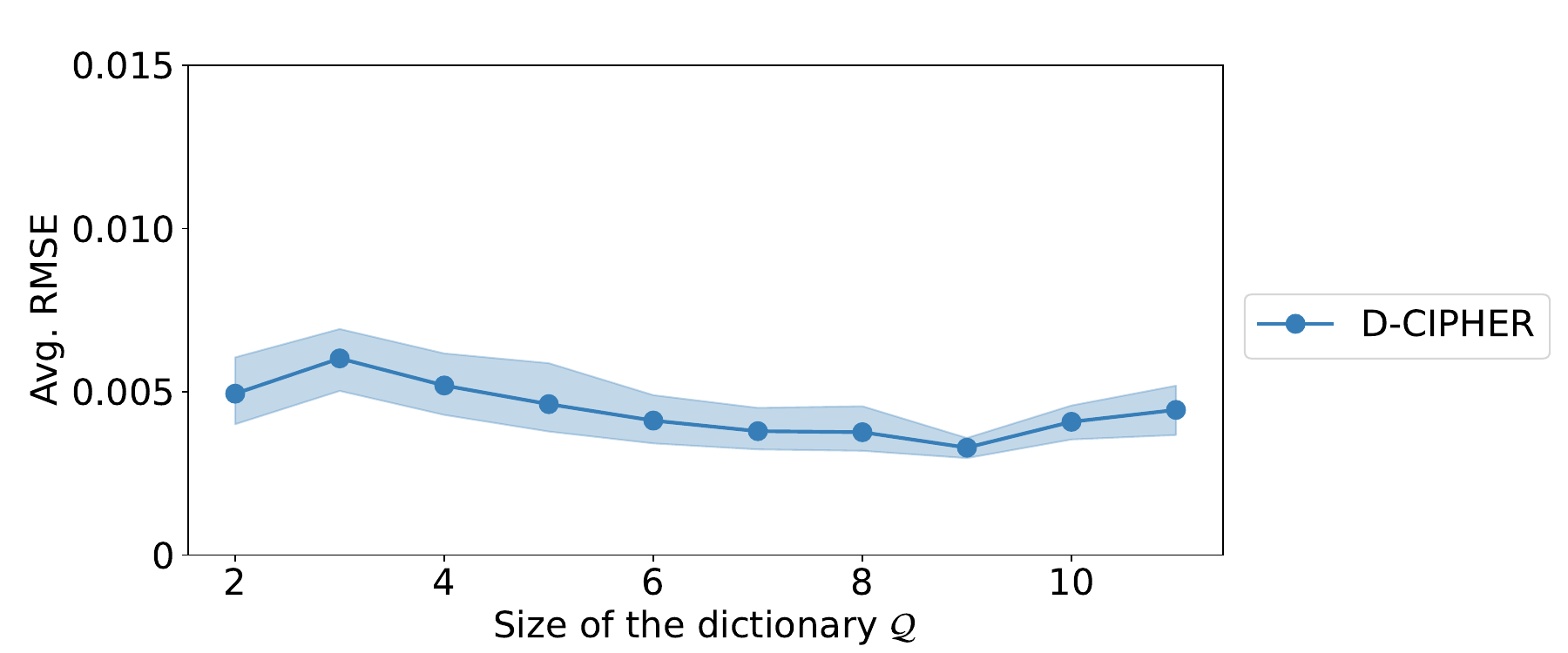}
  \caption{Simulation results for Sharpe-Lotka-McKendrick model. We report the Average RMSE of the $\partial$-bound part for different sizes of the dictionary $\mathcal{Q}$}
  \label{fig:many_dicts}. 
\end{figure}

\begin{figure}[h!]
\centering
  \includegraphics[trim={0cm 0cm 0cm 0cm},clip,width=0.6\textwidth]{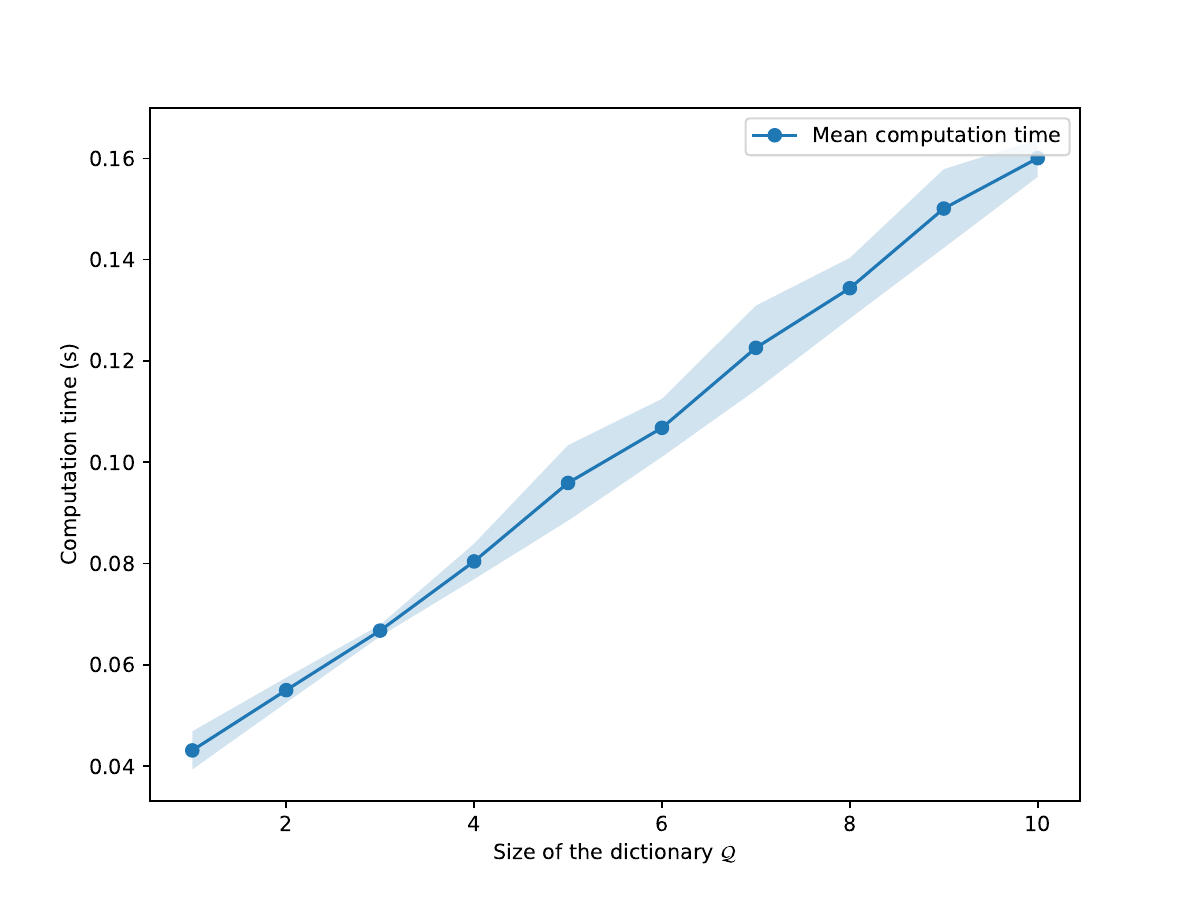}
  \caption{Computation time of the experiments in Appendix \ref{app:increasing_dictionary} The plot shows how the size of the dictionary $\mathcal{Q}$ influences the computation time}
  \label{fig:computation_time_dictionary}
\end{figure}

\subsection{Computational complexity}

We want to emphasize that PDE discovery is not a time-critical application (usually this process is performed manually by scientists) and we believe D-CIPHER's computation time is acceptable for such a task. In this section, we describe which parts of the algorithms are most computationally intensive.

Computation in D-CIPHER is performed in Step 2 and Step 3.

\textbf{Step 2.} Computational complexity of Step 2 depends on the choice of the smoothing algorithm. We want to emphasize that the user can use any smoothing algorithm based on their domain knowledge and experience, including spline regression, LOWESS, and Kalman filters. Gaussian Process has time complexity $\mathcal{O}(n^3)$ where $n$ is the number of data points in a grid $\mathcal{G}$. D-CIPHER is specifically designed to work for sparse and noisy data, so we have not encountered major computational issues while performing Gaussian process regression. We also note that significant progress has been made in adapting Gaussian processes for datasets with many data points \cite{liuGaussian}.

\textbf{Step 3.}
D-CIPHER consists of two optimization loops. The outer optimization is performed by a symbolic regression algorithm (in our case genetic programming). The inner optimization is performed by CoLLie (Section \ref{sec:collie}). Searching through a space of closed-form expression requires testing many candidate equations. D-CIPHER is designed to work with many different algorithms for symbolic regression and it is advised to choose an algorithm that can search through this space most efficiently.

CoLLie was specifically designed to solve the optimization problem as quickly as possible with a minor accuracy trade-off (see Figure \ref{fig:collie}). It is based on LARS which has time complexity $\mathcal{O}(mn^2)$ \cite{efron_least_2004} where $m$ is the number of samples and $n$ is the number of features. In our case, $n=P$ is usually small as it corresponds to the size of the dictionary and $m=SD$. In our experiments, $S$ was set up to 100 and $D$ up to 10. Overall LARS is performed very quickly. The additional steps in CoLLie require only a few arithmetical operations (linear in $n$) and a possible root searching of a single variable function that is efficiently implemented using Brentq algorithm \cite{brent_algorithms_2013}.

Other important parts of the algorithm are the numerical integrations. One such integration is performed at the beginning of the algorithm to compute the matrix $\bm{Z}$ (see Algorithm \ref{alg:d-cipher}). It does not contribute much to the computation time as it is performed only once. The other integration is performed for each candidate equation to compute vector $\bm{w}$. Also, substantial time is spent on computing the values of the candidate function $g$ used in the integration. Fortunately, both of these operations can be implemented as vectorized operations which are designed to run very efficiently on modern hardware. We also discourage overly long equations for $g$ (check the discussion in Appendix \ref{app:hyperparameters}) to limit the number of operations performed.

\subsection{Challenges of derivative estimation}

One of the advantages of D-CIPHER compared to other methods is the use of the variational formulation of PDEs that allows it to circumvent derivative estimation. This is important as derivative estimation is challenging, especially in noisy settings with infrequent sampling. The problem becomes more pronounced the higher the order of the derivative. To demonstrate these issues, we perform a series of synthetic experiments.

\textbf{Qualitative study.} First we qualitatively show how challenging the task of derivative estimation is. We generate an observed trajectory for the damped and forced harmonic oscillator. Then we estimate this trajectory using both Guassian Process regression and Spline regression. As shown in Figure \ref{fig:derivative_estimation} (Panel \textbf{A}), the estimated trajectories are very close to the true trajectory. Then we estimate the first derivative (Panel \textbf{B}) and the second derivative (Panel \textbf{C}). We show the standard finite difference methods as well as derivative estimation techniques using Spline regression and Gaussian Process regression. In both cases we see that the estimated derivatives do not match the ground truth (calculated analytically) as closely as in Panel \textbf{A}. Moreover the mismatch for the second derivative seems to be bigger than for the first derivative.

\begin{figure}[h!]
\centering
  \includegraphics[trim={0cm 0cm 0cm 0cm},clip,scale=0.22]{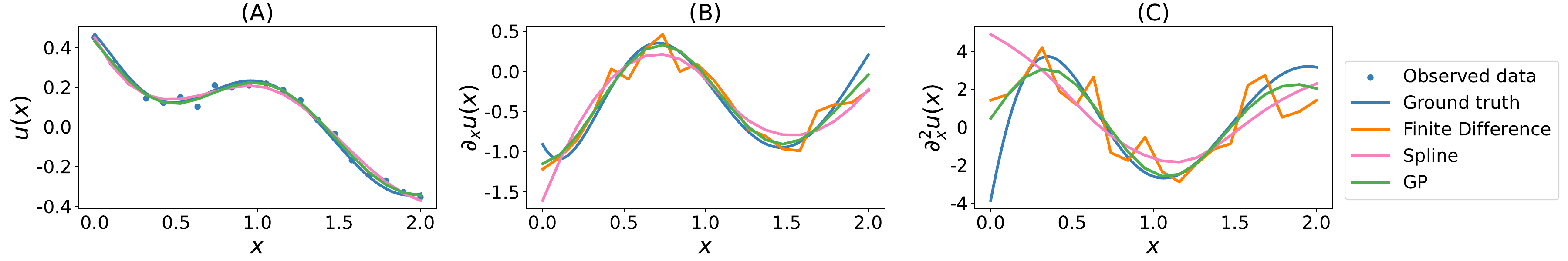}
  \caption{Panel \textbf{A} shows that estimation of the true trajectory can be performed successfully by both Spline regression (Spline) and Gaussian Process regression (GP). Panel \textbf{B} shows the estimated first derivative and Panel \textbf{C} shows the estimated second derivative. We observe that the higher the order the less accurate is the estimate.
  }
  \label{fig:derivative_estimation}
\end{figure}

\textbf{Quantitative study.} We investigate this relation quantitatively for the damped and forced harmonic oscillator and the wave equation. For the oscillator we generate an observed trajectory and then we estimate its derivatives, up to the fourth order using both finite difference and  Gaussian Processes. We then compare the derivatives with the analytically calculated ground truths and measure root mean squared error. The results are shown in Figure \ref{fig:noise_amplification} (Panel \textbf{A}). We can see that the error increases the higher the order of the derivative. For the wave equation, we perform a similar experiment but this time we estimate different mixed derivatives. We consider any mixed derivative $\partial_t^i \partial_x^j$, where $i,j\in\{0,1,2\}$. We demonstrate the results in Figure \ref{fig:noise_amplification} (Panel \textbf{B}). We observe that the error increases the higher the order of the derivative $(i+j)$.

\begin{figure}[h!]
\centering
  \includegraphics[trim={0cm 0cm 0cm 0cm},clip,scale=0.65]{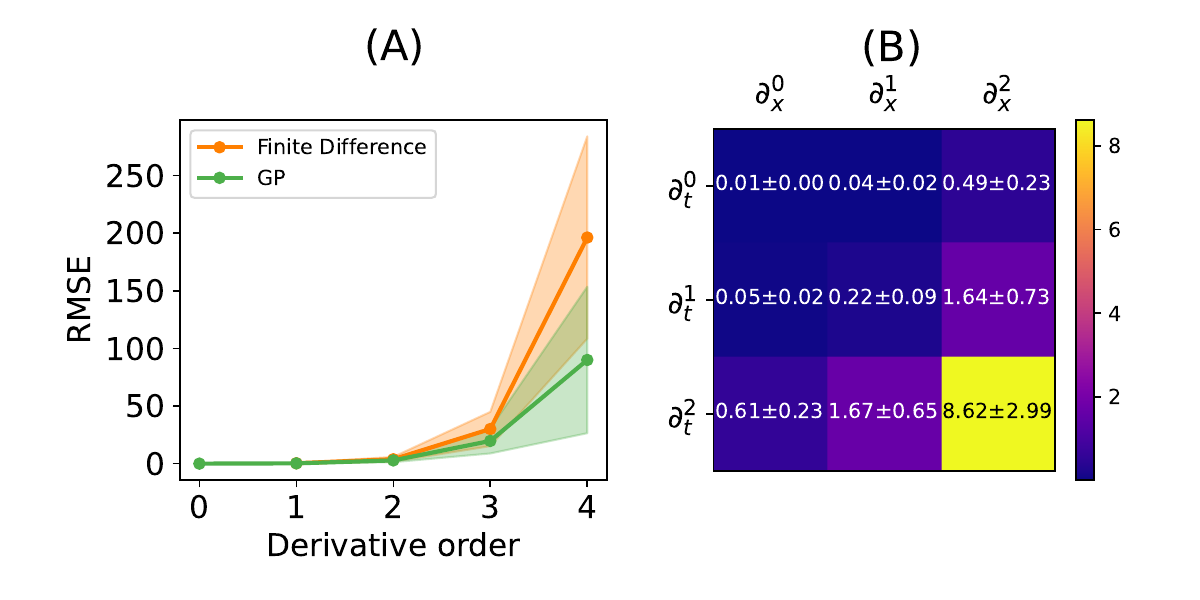}
  \caption{
  Panel \textbf{A} demonstrates the error between the estimated derivative and the ground truth increases with the order of the derivative (performed for the damped and forced harmonic oscillator). Panel \textbf{B} shows that the same happens for the wave equation. The $(i.j)$ entry of the heatmap should be interpreted as the RMSE between the estimated derivative $\partial_t^i\partial_x^j$ and the ground truth.
  }
  \label{fig:noise_amplification}
\end{figure}

\subsection{Challenges of PDE discovery and how we address them}

PDE discovery is a very difficult task with many challenges. In Table \ref{tab:challenges} we summarize some of them and describe how our work addresses them.

\begin{table}[ht]
  \caption{Some challenges of PDE discovery and how we address them}
  \vspace{2mm}
  \label{tab:challenges}
  \centering
  \begin{tabular}{p{0.35\linewidth} p{0.6\linewidth}}
    \toprule
   Challenge & Novel contribution (how we address the challenge)  \\
    \midrule
    Noisy measurements & We use the variational loss function (Equation \ref{eq:loss-function})
    \\ Large diversity of PDEs & We allow any closed-form $\partial$-free part and require no evolution assumption (Section \ref{sec:pdes} \\
     Learning compact equations & We penalize long $\partial$-free parts and use L1 normalization for the $\partial$-bound part (Section \ref{sec:d-cipher}, Appendix \ref{app:hyperparameters} \\
    Efficient search & We develop a quick algorithm, CoLLie, for the inner optimization (Section \ref{sec:collie} \\
    Chaotic systems & We do \textit{not} estimate the initial conditions \citep{chen2018neural} or perform forward time stepping \citep{long_pde-net_2019} which are computationally unstable for chaotic systems \\
    \bottomrule
  \end{tabular}
\end{table}

\subsection{Significance of the new notions}

In this section, we want to justify that the new notions we introduce (evolution assumption, linear combination form, derivative-bound part, derivative-fee part, Variational-Ready PDEs) are important theoretical contributions that help us understand the landscape of different PDEs from the machine learning perspective.

The definitions we introduce let us characterize different classes of PDEs. These new notions complement the standard recognized PDE classes such as semi-linear, quasilinear, hyperbolic, etc. These standard classes were introduced predominantly to characterize the solving techniques or the properties of the solutions, whereas the notions we introduce relate to the difficulty of discovering such equations from data. 

\textbf{Significance of linear combination form and the evolution assumption.} In Table \ref{tab:lc_and_ea} we demonstrate how the presence of the two assumptions, the linear combination form (LC) and the evolution assumption (EA), influences the optimization problem. We see that with both assumptions, the problem is relatively straightforward and reduces to sparse linear regression. With only one of the assumptions present the problem becomes more difficult. With neither of these assumptions, the problem becomes very difficult and requires some other assumptions. D-CIPHER does not make either of these assumptions but it assumes the PDE to be of the form described by Equation \ref{eq:form_for_d-cipher}.

\textbf{Significance of $\partial$-bound part, $\partial$-free part and Variational-Ready PDEs.}
The difficulty of derivative estimation has been one of the main challenges of PDE discovery. The variational formulation allows to circumvent derivative estimation and thus is more robust to noisy data. Previously, the variational formulation has been applied only to a subset of equations in a linear combination form and with the evolution assumption. We observed that any restrictions that the variational formulation might put on the equation come from the terms containing the derivatives. Thus we define the derivative-bound part and the derivative-free part of the PDE due to their significance for the variational formulation. That allows us to define Variational-Ready PDEs as currently the broadest class of PDEs that admit the variational formulation. We believe it is an important contribution as methods requiring derivative estimation underperform in settings with high noise. This definition outlines the current limits of any method that circumvents derivative estimation in that way.

\begin{table}[ht]
  \caption{This table demonstrates how the presence of the two assumptions, the linear combination form (LC) and the evolution assumption (EA), influences the optimization problem.}
  \vspace{2mm}
  \label{tab:lc_and_ea}
  \centering
  \small
  \begin{tabular}{p{0.25cm} p{0.25cm} p{5.7cm} p{3.5cm} p{1.7cm}}
    \toprule
   LC & EA & Equation form & Optimization problem & Examples  \\
    \midrule
     Yes & Yes & $\partial_t u_j = \sum_{p=1}^P \theta_p f_p(x,\bm{u}(\bm{x}),\partial^{[K]}\bm{u}(\bm{x}))$ & Relatively easy. Can be formulated as finding sparse solution to linear least squares (similar to ridge regression) & \citep{brunton_discovering_2016,rudy_data-driven_2017} \\
Yes & No & $\sum_{p=1}^P \theta_p f_p(x,\bm{u}(\bm{x}),\partial^{[K]}\bm{u}(\bm{x})) = 0$ & Medium difficulty. Can be formulated as $P$ separate linear least squares problems as above & \citep{kaheman_sindy-pi_2020} \\
No & Yes & $\partial_t u_j = g(\bm{x},\bm{u}(x))$ & Medium difficulty. Find $g$ using symbolic regression & \citep{qian_d-code_2022} \\
No & No & $f(\bm{x},\bm{u}(\bm{x}),\partial^{[K]}\bm{u}(\bm{x})) = 0$ & Very difficult. Requires other assumptions (in this work, Equation \ref{eq:form_for_d-cipher}) & D-CIPHER \\
  
    \bottomrule
  \end{tabular}
\end{table}

\subsection{Comparison with D-CODE}

To clarify our contributions, we compare D-CIPHER to D-CODE \cite{qian_d-code_2022} which we think is closest in spirit to the algorithm we developed as both of them allow any closed-form derivative-free part and use variational formulation to circumvent derivative estimation. We also note how the new notions we introduce help us compare the two works.

Algorithm presented in D-CODE \cite{qian_d-code_2022} can discover any first-order explicit closed-form ODE, i.e., an equation of the form.
$$ \partial_t u_j(t) = g(\bm{u}(t),t) $$
where $g$ is a closed-form function. It uses the variational formulation of ODEs to circumvent derivative estimation.

\begin{table}[ht]
  \caption{Comparison between D-CIPHER and D-CODE \cite{qian_d-code_2022}}
  \vspace{2mm}
  \label{tab:comparison_d-code}
  \centering
  \begin{tabular}{lll}
    \toprule
   Property & D-CODE \cite{qian_d-code_2022} & D-CIPHER  \\
    \midrule
   Applicable to PDEs & No & Yes \\
Higher-order derivatives & No (only first-order) & Yes \\
Requires evolution assumption & Yes & No \\
Derivative-bound part & Fixed: $\partial_t u_j$ & Learned: $\sum_{p=1}^P \beta_p \hat{\mathcal{E}}_p[\bm{u}]$ \\
Derivative-free part & Any closed-form function & Any closed-form function  \\
Derivative estimation & No & No \\
    \bottomrule
  \end{tabular}
\end{table}

D-CODE \cite{qian_d-code_2022} can be considered a special case of D-CIPHER. We can recover it from D-CIPHER by choosing the dictionary to contain only one element, i.e., $\mathcal{Q} = \partial_t u_j$.

As the derivative-bound part is fixed, every ODE of that form admits the variational formulation. This is not true for PDEs as there are derivative-bound parts that might prohibit the variational formulation. Thus D-CIPHER required careful consideration of the appropriate class of equations to search over.

As D-CIPHER needs to find both the $\partial$-free part (function $g$) and the $\partial$-bound part, the optimization problem is much more complicated. That is why we restrict the derivative-bound part of the PDE to be spanned by terms from the pre-specified dictionary and develop an efficient optimization algorithm, CoLLie. We emphasize that, as is the case for \cite{qian_d-code_2022}, we do not put any constraints on the derivative-free part of the PDE, apart from it being closed-form. 

\subsection{Error bounds}
While we would like to have error bounds for the discovered PDE, we note that the problem we solve is significantly more difficult than the one considered in other works. The space of PDEs we consider is much more complex than the space of PDEs in a linear combination form. In other works (e.g., \cite{rudy_data-driven_2017}, \cite{messenger_weak_2021}), the PDE is basically a vector in $\mathbb{R}^P$ and the discovery task is mostly reduced to finding a sparse enough vector that approximately solves a certain linear equation. Of course, there is a lot of literature that aids in establishing error bounds in such problem settings. However, D-CIPHER searches over a space $\mathbb{R}^P \times CFE(M+N)$, where $CFE(M+N)$ is a space of closed-form expressions in $M+N$ variables. This space is combinatorial in the functional form and continuous in real constants. This makes it very challenging to derive any error bounds.

\begin{figure}[h]
\centering
  \includegraphics[trim={0cm 0cm 0cm 0cm},clip,scale=0.27]{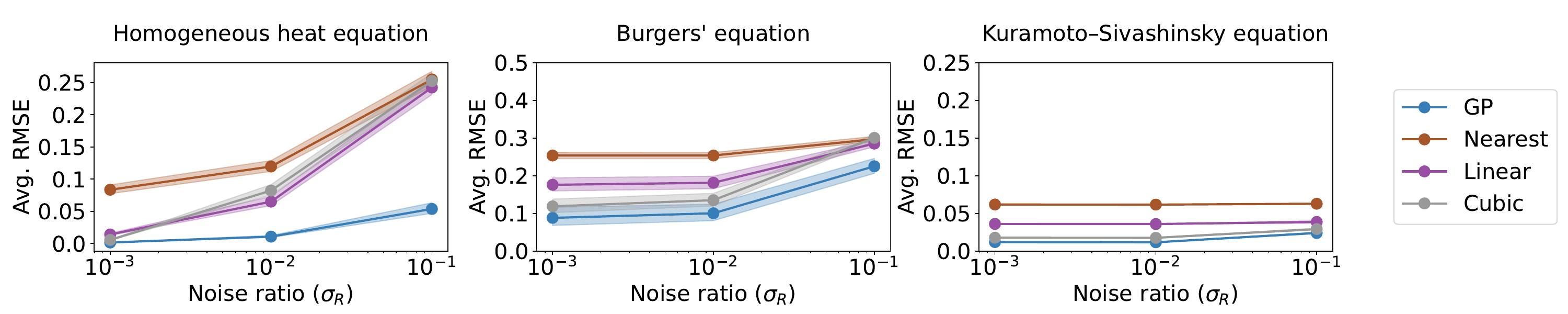}
  \caption{Comparison of different estimation algorithms that can be used in D-CIPHER. GP - Gaussian Process regression, Nearest - Nearest point interpolation, Linear - Linear interpolation, Cubic - Cubic interpolation.
  }
  \label{fig:interpolation}
\end{figure}

\subsection{Comparison between different estimation algorithms}
\label{app:comparison_estimation}
Step 2 of D-CIPHER requires estimating the fields. We emphasize that any choice of reconstruction algorithm can be used, and it should be chosen based on the application and domain knowledge. In our experiments, D-CIPHER is implemented using Gaussian Process regression \citep{williams_gaussian_2006}. In this section, we investigate other common interpolation algorithms. We implement D-CIPHER with different estimation algorithms. In particular, we compare Gaussian Process (GP) against: Nearest point interpolation (Nearest), Linear interpolation, and Cubic interpolation (Cubic). The implementation details of these three algorithms can be found in the scipy \citep{2020SciPy-NMeth} documentation. The results are presented in Figure \ref{fig:interpolation}. We see that the estimation algorithms that produce smoother functions (GP, Cubic) tend to give better results. 

\subsection{How comprehensive does the dictionary $\mathcal{Q}$ need to be?}

Table \ref{app:tab_equations_dictionary} shows 17 different differential equations and what terms they need in a dictionary. All of them can be discovered with a dictionary of size of 10, $\mathcal{Q} = \{\partial_t u, \partial_x u, \partial_t \partial_x u, \partial_t^2 u, \partial_x^2 u, \partial_t \partial_x^2, \partial_x^3 u, \partial_x^4 u, \partial_x(u^2), \partial_x^2(u^2) \}$, and 9 of those equations can be described with a dictionary containing just 4 terms, $\mathcal{Q} = \{\partial_t u, \partial_x u, \partial_t^2 u, \partial_x^2 u \}$. It is thus likely that such small dictionaries are sufficient to discover most of the well-known equations. Many differential equations have similar derivative-bound parts and differ by derivative-free parts. Being able to discover any closed-form derivative-free part is what makes D-CIPHER stand out.

\begin{table}[h]
\centering
\caption{This table lists various Variational-Ready PDEs (some of which are discussed in the paper) and shows which extended derivative terms they require. All of them can be discovered with a dictionary of size of 10,
and 9 of them can be described with a dictionary containing just 4 terms. It is thus likely that such small dictionaries are sufficient to discover most of the well-known equations.}
\vspace{2mm}
\resizebox{\textwidth}{!}{
\begin{tabular}{@{}lcccccccccc@{}}
\toprule
Equation                      & $\partial_t u$ & $\partial_x u$ & $\partial_t \partial_x u$ & $\partial_t^2 u$ & $\partial_x^2 u$ & $\partial_t \partial_x^2 u$ & $\partial_x^3 u$ & $\partial_x^4 u$ & $\partial_x (u^2)$ & $\partial_x^2 (u^2)$ \\ \midrule
Heat equation                 & \cmark         &                &                           &                  & \cmark           &                             &                  &                  &                    &                      \\
Wave equation                 &                &                &                           & \cmark           & \cmark           &                             &                  &                  &                    &                      \\
Burger's equation             & \cmark         &                &                           &                  & \cmark           &                             &                  &                  & \cmark             &                      \\
SLM model                     & \cmark         & \cmark         &                           &                  &                  &                             &                  &                  &                    &                      \\
Damped harmonic oscillator    & \cmark         &                &                           & \cmark           &                  &                             &                  &                  &                    &                      \\
Kuramoto-Sivashinsky equation & \cmark         &                &                           &                  & \cmark           &                             &                  & \cmark           & \cmark             &                      \\
Benjamin-Bona-Mahony equation & \cmark         & \cmark         &                           &                  &                  & \cmark                      &                  &                  & \cmark             &                      \\
Boussinesq equation           &                &                &                           & \cmark           & \cmark           &                             &                  & \cmark           &                    & \cmark               \\
Chafee-Infante equation       & \cmark         & \cmark         &                           &                  &                  &                             &                  &                  &                    &                      \\
Damped wave equation          & \cmark         &                &                           & \cmark           & \cmark           &                             &                  &                  &                    &                      \\
Fisher's equation             & \cmark         & \cmark         &                           &                  &                  &                             &                  &                  &                    &                      \\
Hunter-Saxton equation        &                &                & \cmark                    &                  & \cmark           &                             &                  &                  &                    & \cmark               \\
Klein-Gordon equation         &                &                &                           &                  & \cmark           &                             &                  &                  &                    &                      \\
Korteweg-De Vries equation    & \cmark         &                &                           &                  &                  &                             & \cmark           &                  & \cmark             &                      \\
Liouville's equation          &                &                &                           & \cmark           & \cmark           &                             &                  &                  &                    &                      \\
Sine-Gordon equation          &                &                &                           & \cmark           & \cmark           &                             &                  &                  &                    &                      \\
Sinh-Gordon equation          &                &                & \cmark                    &                  &                  &                             &                  &                  &                    &                      \\ \bottomrule
\end{tabular}
}
\label{app:tab_equations_dictionary}
\end{table}

\subsection{How to take advantage of the observed derivatives?}
D-CIPHER can make use of the observed derivatives (if they are available) by adapting the dictionary. Consider a setting with a dictionary $\mathcal{Q} = \{\partial_t u, \partial_x u, \partial_t \partial_x u, \partial_t^2 u, \partial_x^2 u \}$. If we happen to have the measurements of $\partial_t u$ then we can introduce a new variable $v = \partial_t u$ and change the dictionary to $\mathcal{Q} = \{v, \partial_x u, \partial_x v, \partial_t v, \partial_x^2 u\}$. Note, we have performed experiments where the dictionary contains more than one dependent variable in Appendix \ref{app:discovering_systems}. With observed derivatives, we can also enlarge the space of Variational-Ready PDEs by allowing $g$ ($\partial$-free part) to depend on $v$ as well.

\subsection{Impact on real-world problems}
D-CIPHER is especially useful in discovering governing equations for systems with more than one independent variable. For instance, spatiotemporal data or temporal data structured by age or size. In particular, we envision D-CIPHER to be useful in modeling spatiotemporal physical systems, population models, and epidemiological models.

\textbf{Spatiotemporal physical systems.} D-CIPHER may prove useful in discovering equations governing the oceans or the atmosphere. For instance, some places actively add or remove CO2 from the atmosphere. These “sources” and “sinks” are likely to be described by a $\partial$-free part which D-CIPHER is specially equipped to discover. Similarly, with the ocean temperature where $\partial$-free part can describe a heat source. Another area of application can be modeling seismic waves across the earth’s crust. Here the $\partial$-free part can describe the vibration source (e.g., an earthquake).

\textbf{Population models.} Population models can be used in agriculture to determine the harvest or for pest control to predict their impact on the crop. They have also been used in environmental conservation to model the population of endangered species. Population models have also been used in modeling the growth of cells to better understand tumor growth. Moreover, understanding the evolution of a population pyramid for a specific country may prove invaluable in ensuring its economic stability. As in all these scenarios, the rates of growth and mortality are likely to be described by $\partial$-free part, D-CIPHER is uniquely positioned to discover such equations as an aid to human experts.

\textbf{Epidemiological models.} Epidemiological models are crucial during a pandemic for better planning and interventions. For many diseases, the rates of mortality and infection are age-dependent. Thus, modeling the spread of disease using PDEs (rather than ODEs) might provide superior results.

\subsection{D-CIPHER in practice}
Below we discuss the things to consider while using D-CIPHER.

\textbf{The order of the differential equation.} One of the first considerations should be to choose the order of the differential equation $K$. For many dynamical systems, $K=2$ is sufficient unless we expect very complicated behavior. Then, considering $K=3$ or even $K=4$ may be warranted. Note, that we show that D-CIPHER can discover a fourth-order PDE (Kuramoto-Sivashinsky equation) in Section \ref{sec:experiments}.

\textbf{Homogeneous equations.} Before searching through the whole space of closed-form $g$ (derivative-free parts), we can consider whether the equation we want to discover may be homogeneous. These experiments on the restricted search space can provide quick insights before searching through all closed-form derivative-free parts.

\textbf{Terms in a dictionary.} For a given order of a differential equation $K$, it is a good idea to include all standard differential operators up to order $K$ acting on all the variables. For instance, for $K=2$ and $M=1+1$ we could choose $\mathcal{Q}=\{\partial_t u, \partial_x u, \partial_t \partial_x u, \partial_t^2 u, \partial_x^2 u\}$. That allows to cover all linear PDEs with constant coefficients up to that order. To allow for non-linear PDEs we can include a term like $\partial_t(u^2) = 2u \partial_t u$ that often describes advection (as in Burger's equation).

\textbf{Dictionary steering when dealing with many dependent variables.} When we deal with a system of PDEs rather than a single PDE choosing a dictionary is increasingly important. As we explain in Appendix \ref{app:discovering_systems}, discovering whole systems of PDEs is very challenging and D-CIPHER is not designed to do so out of the box. However, we show how that can be done in certain situations. We can steer what kind of equations are discovered by choosing the terms in the dictionary.

\textbf{Estimation algorithm.} Estimation algorithms make different assumptions on the data-generating process and should be chosen based on domain expertise. As we show in Appendix \ref{app:comparison_estimation}, algorithms that produce smoother functions, such as Gaussian Process regression and cubic spline interpolation, tend to have good results. We can consider the advantages and disadvantages of these methods. For instance, Gaussian Process regression works very well for smooth signals. However, it is computationally intensive and might not perform well if the signal is not smooth enough (it has abrupt changes). Spline interpolation, on the other hand, is faster and more appropriate for less smooth signals, but it might introduce certain unwanted artifacts because of using cubic polynomials to interpolate the data.



\end{document}